\definecolor{deepblue}{rgb}{0,0,0.5}
\definecolor{deepred}{rgb}{0.6,0,0}
\definecolor{deepgreen}{rgb}{0,0.5,0}
\newcommand\pythonstyle{\lstset{
basicstyle=\ttfamily\footnotesize,
language=Python,
morekeywords={self, clip, exp, mse_loss, uniform_sample, concatenate, logsumexp},              
keywordstyle=\color{deepblue},
emph={MyClass,__init__},          
emphstyle=\color{deepred},    
stringstyle=\color{deepgreen},
frame=single,                         
showstringspaces=false
}}
\newcommand\pythoninline[1]{{\pythonstyle\lstinline!#1!}}
\def\mathcolor#1#{\@mathcolor{#1}}
\def\@mathcolor#1#2#3{%
  \protect\leavevmode
  \begingroup
    \color#1{#2}#3%
  \endgroup
}
    \let\Cref\crtCref
    \let\cref\crtcref
\newcommand{\philipp}[1]{}
\newtheorem{theorem}{Theorem}
\title{{Agent Q: Advanced Reasoning and Learning for Autonomous AI Agents}}
\author[1]{Pranav Putta}
\author[1]{Edmund Mills}
\author[1]{Naman Garg}
\author[1]{Sumeet Motwani}
\author[2]{Chelsea Finn}
\author[1]{Divyansh Garg}
\author[2]{Rafael Rafailov}
\affil[1]{The AGI Company (MultiOn)}
\affil[2]{Stanford University}
\begin{abstract}

Large Language Models (LLMs) have shown remarkable capabilities in natural language tasks requiring complex reasoning, yet their application in agentic, multi-step reasoning within interactive environments remains a difficult challenge. Traditional supervised pre-training on static datasets falls short in enabling autonomous agent capabilities needed to perform complex decision-making in dynamic settings like web navigation. Previous attempts to bridge this gap through supervised fine-tuning on curated expert demonstrations often suffer from compounding errors and limited exploration data, resulting in sub-optimal policy outcomes. To overcome these challenges, we propose a framework that combines guided Monte Carlo Tree Search (MCTS) search with a self-critique mechanism and iterative fine-tuning on agent interactions using an off-policy variant of the Direct Preference Optimization (DPO) algorithm. Our method allows LLM agents to learn effectively from both successful and unsuccessful trajectories, thereby improving their generalization in complex, multi-step reasoning tasks. We validate our approach in the WebShop environment, a simulated e-commerce platform—where it consistently outperforms behavior cloning and reinforced fine-tuning baseline, and \textbf{beats average human performance} when equipped with the capability to do online search. In real-world booking
scenarios, our methodology boosts Llama-3 70B model's zero-shot performance from \textbf{18.6\% to 81.7\%} success rate (a \textbf{340\% relative increase}) after a single day of data collection and further to \textbf{95.4\%} with online search. We believe this represents a substantial leap forward in the capabilities of autonomous agents, paving the way for more sophisticated and reliable decision-making in real-world settings.

\end{abstract}
\begin{document}
\maketitle

\begin{abstract}

Large Language Models (LLMs) have shown promising capabilities in natural language tasks requiring complex reasoning. However, they still struggle with generalizing to multi-step reasoning tasks in long-horizon interactive environments like web navigation. This is primarily due to their imitation-based pre-training on datasets which do not include the behaviors needed for interactive decision-making. Recent works have tried to overcome this challenge by supervised-fine tuning on curated expert demonstrations in such environments, however such behavior cloning objectives, suffer from compounding errors and yield sub-optimal policies due to limited exploration data. To address these challenges we build a Monte-Carlo Tree Search approach on top of an LLM Agent, which serves as a proposal distribution. To guide the search we utilize the same model as a zero-shot critic evaluator at each node branch in a form of AI self-critique. While this approach address prior issues with web agents, it is still expensive at inference time, hence we further refine the base agent on the MCTS traces using an off-policy variant of the Direct Preference Optimization (DPO) algorithm at the node level. Our method allows LLM agents to learn effectively from aggregate datasets of both successful and unsuccessful trajectories, improving their generalization in multi-step reasoning tasks. We validate our approach in the WebShop environment, where an agent navigates a simulated shopping website. Starting with an LLM with an SFT-based pre-training, our iterative fine-tuning boost zero-shot performance by \textbf{50\%} relatively to the baseline. In our long-horizon real world booking experiments, we boost LLaMa-3 zero-shot performance from \textbf{18.6\% to 81.7\%} success rate after a single day of data collection outperforming GPT-4. 
\end{abstract}

\section{Introduction}
\label{sec:introduction}
\begin{figure*}[t]
\centering
\includegraphics[width=1 \textwidth]{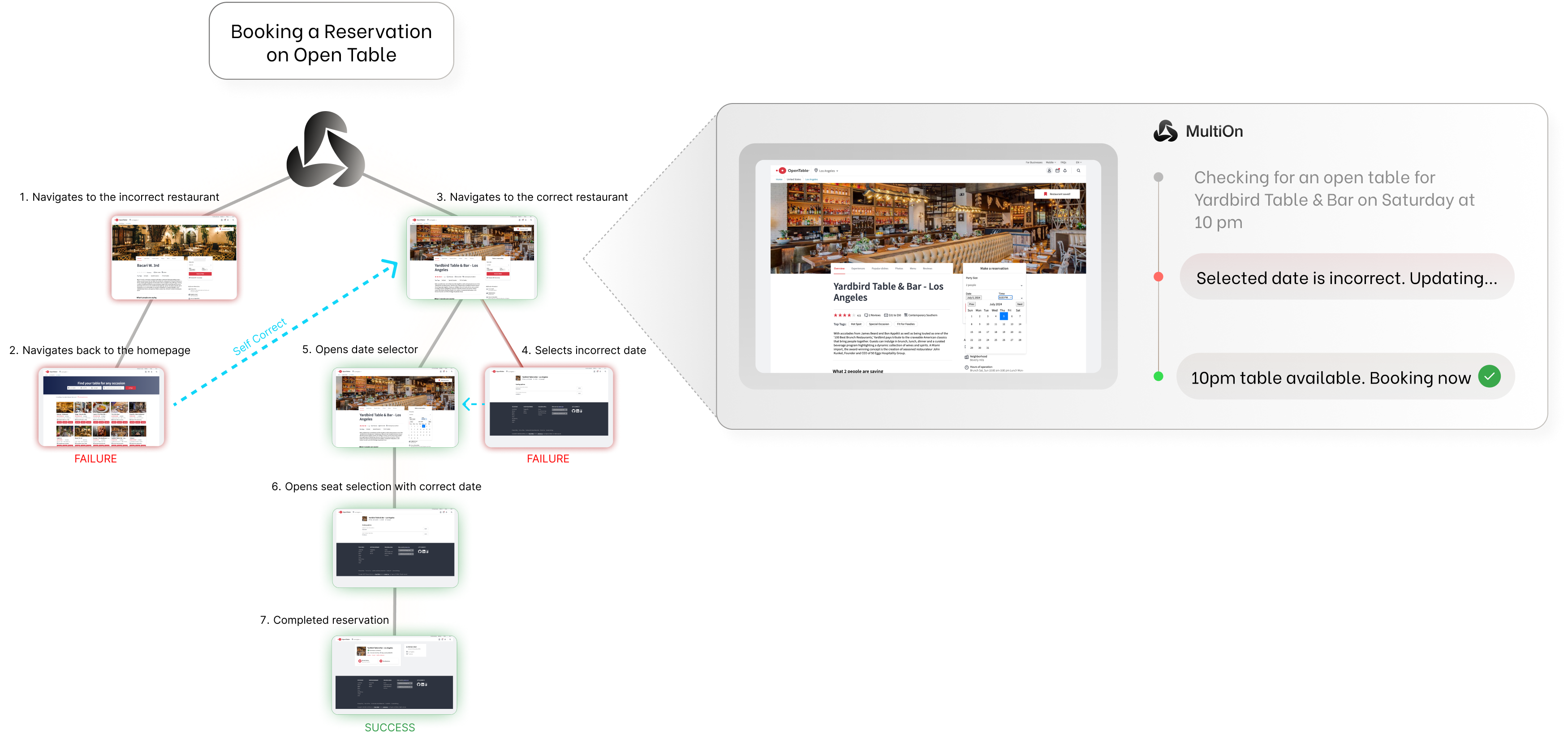}
\caption{We use Monte Carlo Tree Search (MCTS) to guide trajectory collection and iteratively improve model performance using direct preference optimization (DPO). We begin on the left by sampling a user query from the list of tasks in the dataset. We iteratively expand the search tree using UCB1 as a heuristic to balance exploration and exploitation of different actions. We store the accumulated reward obtained for each node in the tree, where in this image darker green indicates higher reward and darker red indicates lower reward. To construct the preference dataset, we compute a weighted score of the MCTS average Q-value and score generated by a feedback language model to construct contrastive pairs for DPO. The policy is optimized and can be iteratively improved.}
\label{fig:teaser}
\end{figure*}

The recent advances in Large Language Models (LLMs) represent a significant leap in artificial intelligence. Frontier models like ChatGPT \citep{openAI2022}, Gemini \citep{geminiteam2023gemini}, Opus \citep{opus2024anthropic}, and LLaMA-3 \citep{touvron2023llama} demonstrate promising reasoning capabilities that approach average human performance in a number of domains. These breakthroughs have extended the utility of LLMs from traditional chat and text-based applications to more dynamic, agentic roles, in which they do not just generate text but can take actions autonomously in a number of environments including code and software engineering \citep{holt2024l2mac, zhang2024codeagent,jimenez2024swebench, yang2024sweagent}, device control \citep{wang2024mobileagent, zhang2023appagent, chen2024octopus} and web applications \citep{hong2023cogagent, deng2023mind2web, zhou2024webarena, lai2024autowebglm, gur2024real} among others. However, despite these advancements, significant challenges persist: LLMs still struggle to generalize effectively in interactive, multi-step environments, since they are not native trained for such applications \philipp{refs?}. This is true, even for some of the strongest models of the current generation, such as GPT-4 \citep{openai2024gpt4}. 

A growing literature on agentic formulation seeks to address these issues; however these works mostly focus on building frameworks around prompt-based learning on existing models or limited fine-tuning on static datasets, and are thus limited by the base models' reasoning and decision making capabilities.
Reasoning and planning have indeed been highlighted as core challenges for current LLMs. Since the seminal work on chain-of-thought reasoning \citep{wei2023chainofthought}, significant efforts have been made to improve these capabilities via prompt-based strategies \citep{kojima2022large, wang2023selfconsistency, qiao2023reasoning, yao2023tree}. While successful, these approaches are still bounded by the base model's performance. Another direction of research has explored fine-tuning approaches \citep{zelikman2022star, pang2024iterative}, and more recently combining them with inference-time search prompting \citep{yao2023tree} to produce fine-grained feedback. Concurrent works \citep{xie2024monte, hwang2024selfexplore, zhang2024chain, tian2024selfimprovement} utilize the traces produced by search algorithms and combine them with optimization approaches \citep{rafailov2023direct, zelikman2022star} to achieve significant boost in capabilities, especially in mathematics problem solving and code generation. 

\philipp{the transition to agents still feels a bit weak here; why not compress some of the sentences above and make sure that these are about LLMs and not agents, and then the motivation for the para below is that you are taking these approaches into the ``real world'' which is a much bigger step?}

In this work we explore improving planning and reasoning capabilities of a web agent, which interacts with a real world website. Our goal is to design an approach that allows the agent to improve with autonomous experience and limited supervision. Indeed, prior works \citep{yao2023react, zhang2024agentohana, masterman2024landscape, sumers2024cognitive} have shown strong reasoning to be critical for performance of autonomous agents, where challenges are even greater than during text generation, as the model needs to further understand how its actions affect its environment. Towards this goal, we introduce \textbf{Agent Q}---a novel approach that combines several key concepts in reasoning, search, self-critique and reinforcement learning. Our method takes inspiration from Sutton's The Bitter Lesson on the power of general purpose methods that continue to scale with increased computation, showing the significant benefits of combining \textit{search} and \textit{learning}.

Inspired by the success of search-based methods in prior game-playing settings \citep{silver2017go, brown2019, gray2021humanlevel} and mathematical reasoning \citep{yao2023tree, Besta_2024}, we deploy a Monte Carlo Tree Search (MCTS) based search routine over web pages to guide agent exploration. Given the complexity of the environment, we use a base LLM for sampling possible rationales and web actions to explore. While this simple search-strategy shows a meaningful improvement in the success rate, it still struggles on long horizon tasks due to sparsity of environment rewards. Indeed even a small mistake across the trajectory can cause the final agent output to be wrong, creating significant credit assignment problems. To overcome this, we use AI feedback \citep{bai2022constitutional} and self-criticism \citep{yuan2024selfrewarding} to further prompt the LLM to provide self-evaluation feedback at each node, which serves as intermediate reward and helps guide the search steps. This meaningfully improves the final agent success rate, but requires significant online interactions and moreover the capability to rollback actions, which is not always possible in online realistic settings. Such online autonomous search with little supervision on the web can result in a weak or unsafe agent which can make many errors, resulting in risky behaviors in sensitive online settings like bank transfers and sensitive information sharing. 

To correct this, we use the traces generated by the search process to improve capabilities of the model by learning from both the successful and unsuccessful trajectories with offline reinforcement learning, utilizing the Direct Preference Optimization (DPO) algorithm. We create preferences over different branches at the node level, which are scored using a mixture of the AI process feedback rewards and the final success rate of the explored branch. 
We evaluate our approach on the simulated WebShop benchmark \citep{yao2023webshop}—a simulated e-commerce platform—as well as a real-world reservations booking website.  We utilize LLaMa 3-70B as the base model in our experiments. In the WebShop environment, our approach consistently outperforms behavior cloning and reinforcement learning fine-tuned baselines, and \textbf{beats average human
performance when equipped with the capability to do online search.}

In our real-world booking experiments, using our \textbf{Agent Q} framework we improve the model zero-shot absolute success rate from \textbf{18.6\%} to \textbf{81.7\%}  (a \textbf{340\% relative increase}), outperforming GPT-4's performance after a single day of autonomous data collection. When we equip Agent Q with online search capability, our absolute success further improves to \textbf{95.4\%}. We believe that our approach represents a significant step forward in the development of autonomous web agents through it's search and self-critique capabilities, setting a new benchmark for reliable multi-step decision-making in interactive settings.

\section{Related Work}
\label{sec:related_work}
Our work touches on a large number of research directions around agent design, self-improvement, reasoning and reinforcement learning. We include a short overview of related works from those various fields below. 

\subsection{Guided Search for Reasoning and Planning}
The latest generation of Large Language Models (LLMs) have demonstrated promising emerging properties around reasoning and planning. Moreover such behaviours can be directly elicited from strong models only using simple prompting techniques \citep{wei2023chainofthought, kojima2022large, qiao2023reasoning}. These have also become an integral part of agentic design \citep{yao2023react, zhang2024agentohana}, which we also utilize for our approach. Another emerging research direction is based around step-by-step verifiers or ``Process Reward Models'' \citep{uesato2022solving, lightman2023lets}, specifically for mathematical reasoning. These have shown to improve performance beyond purely outcome-based training, however they require a large amount of human effort to label individual steps. Some recent approaches have proposed self-supervised methods for step-level supervision \citep{hwang2024selfexplore, wang2024mathshepherd, setlur2024rl}. A number of concurrent works \citep{xie2024monte, zhang2024chain, tian2024selfimprovement} have further explored tree-based search approaches \citep{yao2023tree} in combination with DPO \citep{rafailov2023direct} training for math-based reasoning. These algorithms optimize actions at the node level, using different branches produced by the search algorithm to create preference pairs. Our approach shares similarities to the self-supervised search proposed in \citep{yao2023tree} with a combination of AI-based feedback \citep{bai2022constitutional, yuan2024selfrewarding} to guide intermediate search steps, but we are the first to scale this a realistic agent setting. Similar approaches were proposed in \citep{zhou2024language,hao2023reasoning}, and other works \citep{koh2024tree}; however these works only use the base model's zero-shot capability and do not train it further. Moreover they are only evaluated on simulated environments. Beyond the search stage, our work further adopts the training methodology of \citep{xie2024monte, zhang2024chain, tian2024selfimprovement}, which significantly boosts our agent's zero-shot capabilities.

\subsection{Web Agents}
The strength and capabilities of recent pretrained Large Language (Vision) Models LL(V)Ms has significantly boosted progress in developing autonomous web-agents. Improved code understanding and long context have allowed agents to represent environment state and action space with document object model (DOM) allowing for deployment in complex and realistic domains. Moreover strong reasoning \citep{yao2023react} and planning \citep{liu2023bolaa,zhang2024agentohana} capabilities have also led to the development of a number of promising agents \citep{zhang2023multimodal, hong2023cogagent, zhou2024webarena, deng2023mind2web, gur2024real}. Beyond using LL(V)Ms as plug-and-play planners/policies, recent works have sought to improve agentic-specific performance. Examples include online exploration \citep{zhang2023ufo}, planning \citep{zhang2024appagent}, error-correction \citep{wang2024mobileagent}, and self- \citep{wu2024os} or AI-critique \citep{he2024webvoyager, pan2024autonomous}. However, with small exceptions \citep{nakano2022webgpt} (which is still limited in scope) these agents mostly provide a framework around a strong pre-existing model like GPT4-V or deploy limited fine-tuning and adaptation. In this work we show that model training is crucial for continuous improvement. We combine a planning and reasoning agent with MCTS inference-time search and AI self-critique for self-supervised data collection, which we then use for RL type training. 


\subsection{Reinforcement Learning for LLMs and Agents}
Reinforcement Learning has become a significant component of training modern generative AI systems \citep{ouyang2022training, bai2022constitutional, touvron2023llama}. Classical approaches have deployed the PPO algorithm \citep{schulman2017proximal}---or similar policy-gradient based methods---and have even been scaled to autonomous web search agents \citep{nakano2022webgpt} as well as embodied applications with vision-language models \citep{zhai2024finetuning} (in simulation). However, these algorithms are challenging due to their complexity and the need for a high number of online samples from the model. This is especially prominent in potentially risky situations, such as autonomous agentic models that could make a number of impactful mistakes during training. Implicit Language Q-learning \citep{snell2022offline} and the Q-transformer \citep{chebotar2023qtransformer} are offline RL algorithms \citep{levine2020offline} designed for auto-regressive transformer models, and hence can be safely trained on pre-collected datasets; however they have not been successfully scaled to modern LLMs. While these methods represent a token-level MDP, \citep{zhou2024archer} has shown success formulating the RL problem at a step level and these ideas have recently been scaled to a general device-control agent \citep{bai2024digirl}. However, these algorithms still have high complexity and require auxiliary models, such as value functions, so instead in our approach we opt to use the Direct Preference Optimization (DPO) algorithm \citep{rafailov2023direct} due to it's simplicity and natural fit for the branching nature of tree-search based data.

\section{Preliminaries}
In this section we will outline the preliminaries of our agent training process.

\subsection{Agent Formulation}\label{sec:agent_formulation}
\begin{figure}
    \centering
    \includegraphics[width=1.0\linewidth]{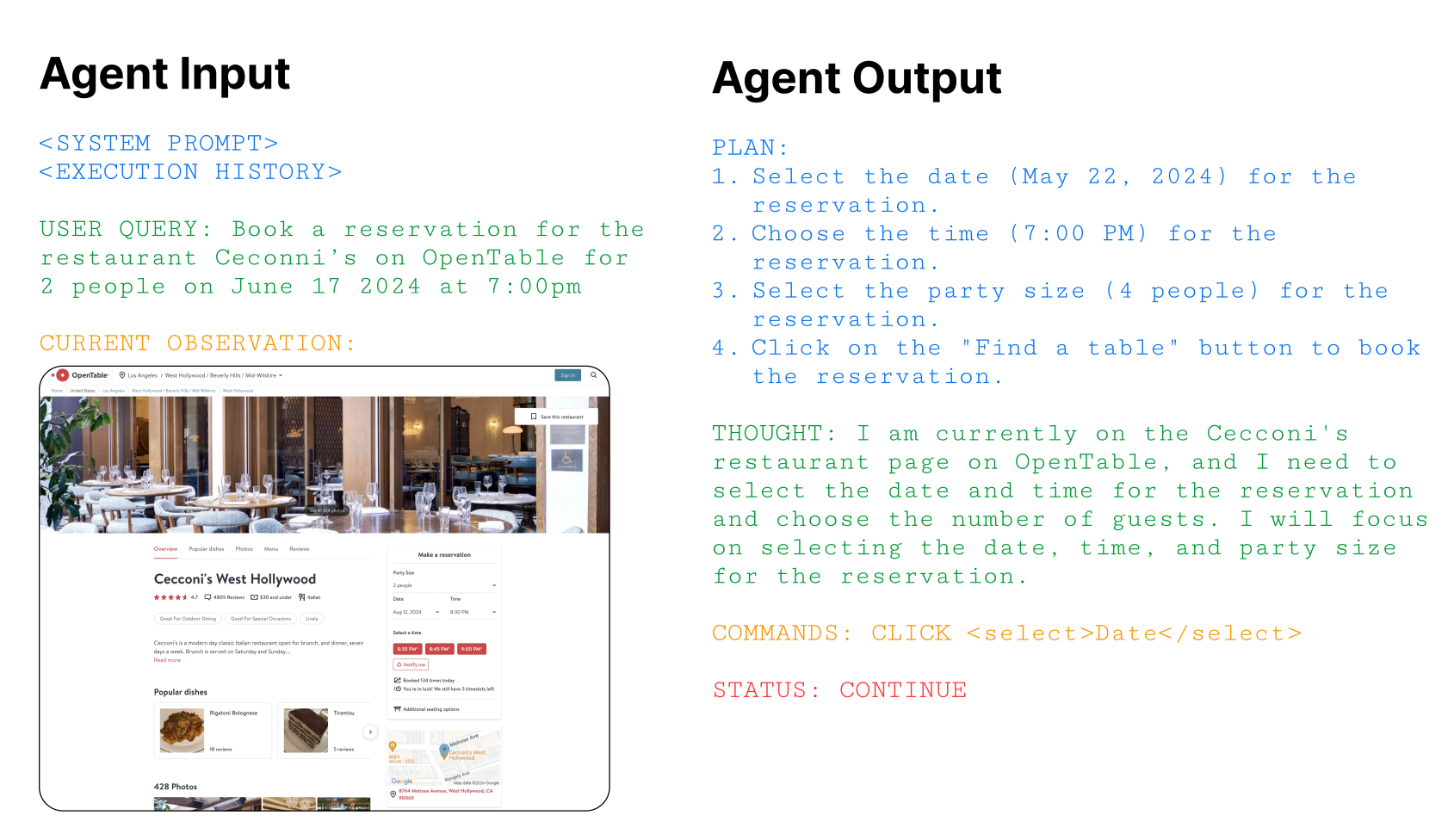}
    \caption{We provide the following input format to the Agent, consisting of the system prompt, execution history, the current observation as a DOM representation, and the user query containing the goal. We divide our Agent output format into an overall step-by-step plan, thought, a command, and a status code.}
    \label{fig:enter-label}
\end{figure}

We consider a general POMDP setup $(\mathcal{O}, \mathcal{S}, \mathcal{A}, T, R, \mu_0, \gamma)$ where $\mathcal{O}$ denotes the observation space, $\mathcal{S}$ the unobserved state space, $\mathcal{A}$ the action space, $T(\mathbf{s}_{t+1}|\mathbf{s}_t, \mathbf{a}_t)$ the transition distribution (in this case the dynamics of a web browser), $R(\mathbf{s}, \mathbf{a})$ the reward function (in this work we use sparse rewards of 1/0 representing success/failure), $\mu_0(\mathbf{s}_0)$ the initial state distribution, and $\gamma$ the discount factor, which we set to 1. A POMDP is the most suitable framework to model web interactions for several reasons - first novel environments, which the agent is unfamiliar with require exploration in order to locate the task objective, consistent with the meta-reinforcement learning as task inference view \cite{humplik2019metareinforcementlearningtask}. Moreover, the real web is dynamic, which creates partial observability of the current state each time the agent is deployed - i.e. it does not a priori know current booking availability before attempting to do it. We will outline the main parts of our web agent below.

   \noindent\textbf{The agent observation} $\mathbf{o}_t \in \mathcal{O}$ are commands/information  given by the user and the web browser. The first observation $\mathbf{o}_1$ is a user text instruction, such as 
    
    "Book reservation for restaurant Cecconi's on OpenTable for 4 people on May 22 2024 at 7:00 PM" 
    
    for example and a browser home page. Subsequent observations consist of web pages from the browser, represented as a HTML DOM format. Occasionally for some tasks the agent might ask for confirmation/feedback from the user, which then also becomes part of the observation.
    
    \noindent\textbf{The agent actions} $\mathbf{a}_t \in \mathcal{A}$ are composite, based on agent history $\mathbf{h}_t$. Our base approach is a ReAct agent \cite{yao2023react} with a preliminary planning step (PlanReAct) \cite{liu2023bolaa} with few additional components. 
    \begin{itemize}
        \item \textbf{Planning} For the first action after the initial observation we leverage the base LLM's planning capabilities \cite{huang2022languagemodelszeroshotplanners} and prompt the agent to generate a plan $\mathbf{a}_1^{\text{plan}}\sim\pi(\mathbf{a}_1^{\text{plan}}|\mathbf{h}_1)$ of sequential steps to execute in language.
    

        \item\textbf{Reasoning} Subsequently all actions consist of a thought action $\mathbf{a}_t^{\text{tht}}\sim\pi(\mathbf{a}_t^{\text{tht}}|\mathbf{h}_t)$, which is reasoning step \cite{wei2023chainofthought}.
        
    
        \item\textbf{Environment action} Next we generate the browser interaction command $\mathbf{a}_t^{\text{env}}\sim\pi(\mathbf{a}_t^{\text{env}}|\mathbf{h}_t, \mathbf{a}_t^{\text{tht}})$, which consists of a finite set of options like "CLICK [ELEMENT ID]", "SCROLL", "TYPE [CONTENT]" or "ASK USER [CONTENT]" etc.. This is the only part of the action generation, which interacts with the environment. 
        
        \item \textbf{Explanation action} After the environment interaction action has been generated, we additional prompt the model for an explanation action $\mathbf{a}_t^{\text{expl}} \sim \pi(\mathbf{a}_t^{\text{expl}}|\mathbf{h}_t, \mathbf{a}_t^{\text{tht}}, \mathbf{a}_t^{\text{env}})$.

    \end{itemize}
    
    We denote the step action $\mathbf{a}_t$ as a tuple of plan, thought, environment and explanation actions for the first step and thought, environment and explanation actions for subsequent steps. When optimizing models we consider the joint likelihood
    \begin{align}  
    \log \pi(\mathbf{a}_1|\mathbf{h}_1) = &\log \pi(\mathbf{a}_1^{\text{expl}}|\mathbf{h}_1, \mathbf{a}_1^{\text{env}}, \mathbf{a}_1^{\text{tht}}, \mathbf{a}_1^{\text{plan}})+ \log \pi(\mathbf{a}_1^{\text{env}}|\mathbf{h}_1, \mathbf{a}_1^{\text{tht}}, \mathbf{a}_1^{\text{plan}}) + \nonumber\\&\log \pi(\mathbf{a}^{\text{tht}}_1|\mathbf{h}_1, \mathbf{a}^{\text{plan}}_1) + \log \pi(\mathbf{a}^{\text{plan}}_1|\mathbf{h}_1)
    \end{align}
    for the initial action and 
    $$\log \pi(\mathbf{a}_t|\mathbf{h}_t) =\log \pi(\mathbf{a}_t^{\text{expl}}|\mathbf{h}_t, \mathbf{a}_t^{\text{env}}, \mathbf{a}_t^{\text{tht}})+  \log \pi(\mathbf{a}_t^{\text{env}}|\mathbf{h}_t, \mathbf{a}_t^{\text{tht}}) + \log \pi(\mathbf{a}^{\text{tht}}_t|\mathbf{h}_t)$$
    for subsequent actions, unlike some prior works \cite{zhai2024finetuning}, which down-weight the reasoning likelihood.
    
    \noindent\textbf{The agent state} is the current state of the web, which may mot be observable. In this POMDP formulation we also need to build an agent memory component $\mathbf{h}_t$. Prior works have used the entire trajectory of observations and actions, however HTML DOMs can be hundred of thousands of tokens long. Moreover realistic web-tasks can require many more interactions than static benchmarks such as WebShop \cite{yao2023webshop} and WebArena \cite{zhou2024webarena}, which most prior works use. This makes it impractical to use full web trajectories due to limited context windows, potential out-of-distribution issues and practical inference speed and cost. Instead, we build the history representation of the agent as $\mathbf{h}_t = (\mathbf{a}_1, \ldots, \mathbf{a}_{t-1}, \mathbf{o}_t)$. That is, the agent history consists of the actions generated so far and the current browser state. With some abuse of notation we will also refer to this as the agent state. Even though only the environment action is used for interacting with the browser, we construct the agent thought and explanation actions to act as a form of inner monologue \cite{huang2022innermonologueembodiedreasoning} and adequately represent its state and intentions. This allows us to use a significantly more compact history representation. We should note that, while only the environment action affects the browser state, the planning, reasoning and explanation components affect subsequent decisions due to conditioning. Because of this reason, when we optimize the agent, we compute likelihoods over the composite action.

\subsection{Fine-Tuning Language Models From Feedback}
Classical approaches to RLHF in foundation models \cite{stiennon2022learning, ouyang2022training} use the model as a policy $\pi_{\theta}$ and optimize an objective of the form:
\begin{equation} \label{eq:standard_rl}
\mathbb{E}_{\mathbf{a}\sim\pi_{\theta}(\mathbf{a}|\mathbf{h})}[r(\mathbf{a},\mathbf{h})] - \beta\mathbb{D}_{KL}[\pi_{\theta}(\mathbf{a}|\mathbf{h})||\pi_{\text{ref}}(\mathbf{a}|\mathbf{h})] 
\end{equation}
where $\pi_{\text{ref}}$ is some reference policy (usually the initial model). The goal of this formulation is to optimize some target objective (expressed by the reward $r(\mathbf{a},\mathbf{h})$) while preventing out-of-distribution drift. This objective can be extended to multi-step agentic problems, where the model interacts with an external environment $\text{env}$ such as in \cite{nakano2021webgpt} which focuses on information retrieval using web navigation. In this case we use an objective of the kind
\begin{equation} \label{eq:multi_step_rl}
\mathbb{E}_{\pi_{\theta}, \text{env}}\left[\sum_t r(\mathbf{a}_t,\mathbf{h}_t)] - \beta\mathbb{D}_{KL}[\pi_{\theta}(\mathbf{a}_t)|\mathbf{h}_t)||\pi_{\text{ref}}(\mathbf{a}_t|\mathbf{h}_t)]\right]
\end{equation}
Classical RLHF has used policy gradient type of algorithms, such as PPO \cite{schulman2017proximal}, however, they are complex and require online data, which can be costly/dangerous to collect autonomously in the agent setting. While PPO has shown some success in prior web agent applications \cite{nakano2021webgpt}. The issues above largely make the approach not practical for general web tasks, beyond information retrieval. In this work we utilize some recent alternatives, outlined below.

\subsubsection{Reinforced Fine-Tuning}\label{sec:RFT}
Reinforced fine-tuning (RFT) algorithms \cite{zelikman2022star, gulcehre2023reinforced, yuan2023scaling, singh2024human} have grown in popularity due to their simplicity and scalability. These methods aggregate data and filter out the sub-optimal samples based on some reward model or a verifier to construct a growing dataset of high-quality trajectories $\mathcal{D}$. Given this dataset and a parameterized model $\pi_{\theta}$ we can carry out standard supervised fine-tuning (SFT):
\begin{equation}
    \mathcal{L}(\pi_{\theta}, \mathcal{D})=-\mathbb{E}_{ \mathcal{D}}\left[\sum_{t=1}^{T}\log \pi_{\theta}(\mathbf{a}_t|\mathbf{h}_t)\right]
\end{equation}
In this objective the divergence penalty is only applied implicitly by limiting the number of training rounds. While simple and relatively successful, empirically these methods tend to under-perform standard RL and alternatives \cite{dubois2024alpacafarm, tajwar2024preference, setlur2024rlincorrectsyntheticdata} in the text generation domain, particularly in reasoning. We largely observe similar empirical results, and we use these methods mostly as baselines to build intuition.

\subsubsection{Direct Preference Optimization}\label{sec:DPO}
Direct Preference Optimization (DPO) \citet{rafailov2023direct} is an offline RL \cite{levine2020offline} alternative to the classical RLHF optimization pipeline. It is a suitable algorithm for agent fine-tuning, as it can use fully offline data and does not require online rollouts. The original formulation in the pure text generation setting considers feedback of pairwise comparisons $(\mathbf{h}, \mathbf{a}^w, \mathbf{a}^l)$, where $\mathbf{s}$ is a single prompt and $\mathbf{a}^w$ and $\mathbf{a}^l$ are two responses with $\mathbf{a}^w\succ \mathbf{a}^l$ indicating that $\mathbf{a}^w$ is preferred over $\mathbf{a}^l$. The DPO objective then minimizes the following loss:

\begin{equation}\label{eq:step_DPO}
     \mathcal{L}_\text{DPO}(\pi_{\theta}; \mathcal{D}) = -\mathbb{E}_{(\mathbf{h}, \mathbf{a}^w, \mathbf{a}^l)\sim \mathcal{D}}\left[\log \sigma\left(\left(\beta \log\frac{\pi_{\theta}(\mathbf{a}^w| \mathbf{h}^w)}{\pi_{\text{ref}}(\mathbf{a}^w| \mathbf{h}^w)}\right)- \left(\beta \log\frac{\pi_{\theta}(\mathbf{a}^l| \mathbf{h}^l)}{\pi_{\text{ref}}(\mathbf{a}^l| \mathbf{h}^l)}\right)\right)\right]
\end{equation}

While the algorithm was developed in a bandit setting \cite{hejna2024contrastive, rafailov2024r} have extended it to multi-turn settings with preferences over over trajectories. In our setting, we can directly utilize this objective as:
\begin{equation}\label{eq:trajectory_DPO}
     \mathcal{L}_\text{T-DPO}(\pi_{\theta}; \mathcal{D}) = -\mathbb{E}_{(\tau_w, \tau_l)\sim \mathcal{D}}\left[\log \sigma\left(\left( \sum_{t=0}^{|\tau^w|}\beta \log\frac{\pi_{\theta}(\mathbf{a}_t^w| \mathbf{h}_t^w)}{\pi_{\text{ref}}(\mathbf{a}_t^w| \mathbf{h}_t^w)}\right)- \left( \sum_{t=0}^{|\tau^l|}\beta \log\frac{\pi_{\theta}(\mathbf{a}_t^l| \mathbf{h}_t^l)}{\pi_{\text{ref}}(\mathbf{a}_t^l| \mathbf{h}_t^l)}\right)\right)\right]
\end{equation}
One bottleneck for the practical deployment of the algorithm is the need for a reference model $\pi_{\text{ref}}$ during optimization, which requires more computational resources. Instead in our settings, we slightly modify the algorithm using an off-policy replay buffer, which aggregates trajectory data, as well as likelihoods of the generated actions. During the optimization step, we sample tuples of trajectories and the corresponding likelihoods under the data generation (reference) density, which eliminates the need for a separate reference model. 

\section{Preliminary Approach With Outcome Supervision}\label{sec:example}

 


\begin{figure}
    \centering
    \vspace{-5mm}
    \includegraphics[width=0.92\textwidth]{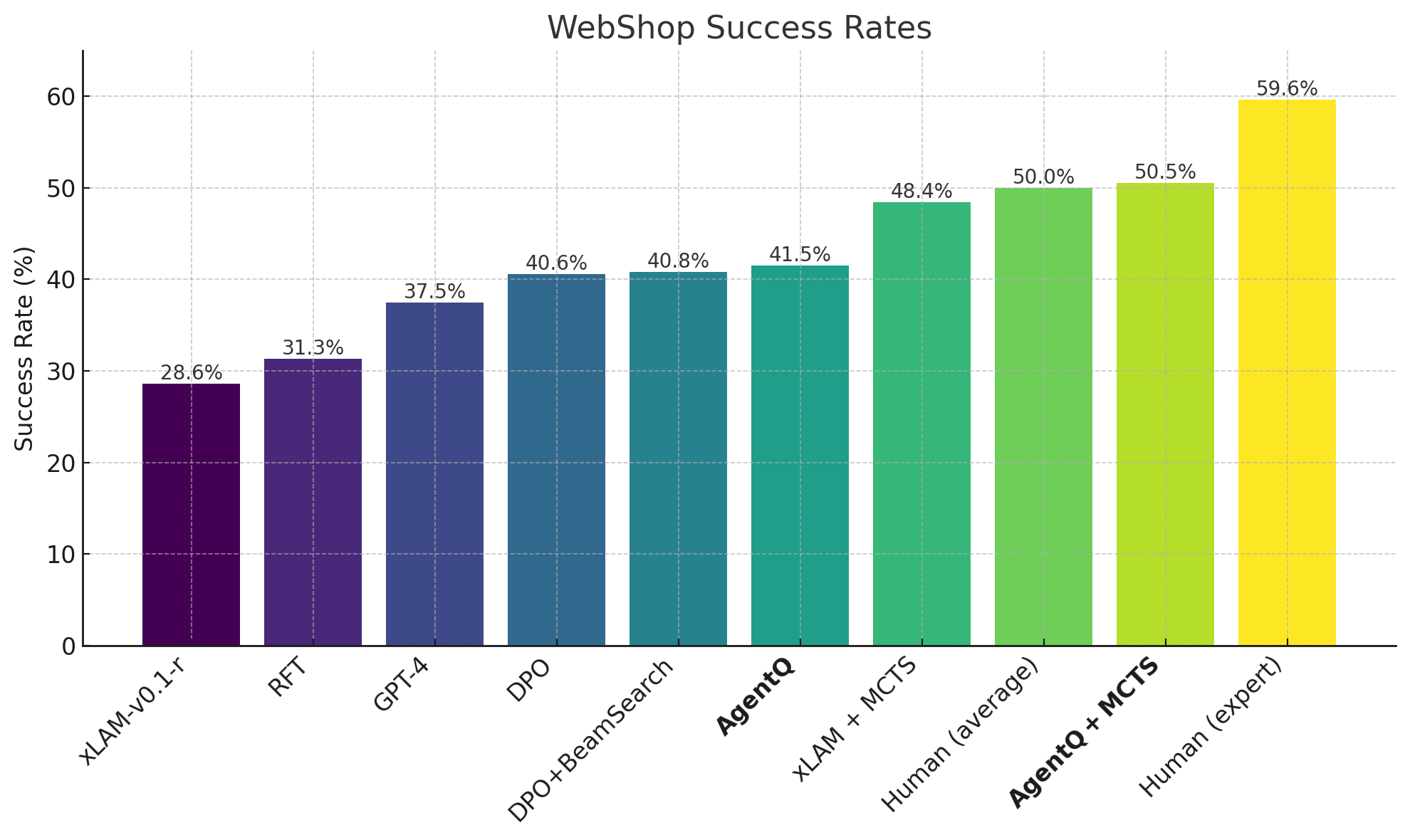}
    \caption{Success rate of different approaches on the WebShop \cite{yao2023webshop} task. All models are based on xLAM-v0.1-r \cite{zhang2024agentohana}. RFT and DPO over xLAM-v0.1-r demonstrate improvements in performance from 28.6\% to 31.3\% and 37.5\% respectively. However, these methods still lag behind average human performance of 50.0\%. Our approach, Agent Q + MCTS achieves a significant gain (76.57\% relative improvement) over the base model, outperforming average human performance on WebShop with a success rate of \textbf{50.5\%}.}
    \label{fig:WebSHop_initial_results}
\end{figure}
In this section we will outline preliminary experimental results, which will build the base understanding for our further experiments. We use the AgentOhana xLAM-v0.1-r model \cite{zhang2024agentohana}, which is a fine-tune of a pre-trained Mixtral-8x7B-Instruct-v0.1 model \cite{jiang2024mixtral} on a mix of agentic applications, including WebShop SFT data. We also incorporate the same agent configuration \footnote{https://github.com/SalesforceAIResearch/xLAM} specified by the AgentLite \cite{liu2024agentlite} work to ensure a fair comparison between our fine-tuned model and the xLAM base model performance. We evaluate all approaches on the WebShop environment \cite{yao2023webshop}, where the agent needs to find particular products by browsing a simulated web shop. The environment comes with a set of 12,087 pre-defined tasks (corresponding to specific products to find), which we split into a train set of 11,000 tasks, which we use for further agent fine-tuning and a set of 1,087 held-out tasks, which we use for zero-shot evaluation. 
We show success rates (exact product match) for different approaches in Fig. \ref{fig:WebSHop_initial_results}. The base xLAM-v0.1-r model achieves success rate of 28.6\% on the test tasks. All other methods are based on outcome-based supervision only, depending on whether a particular attempt was successful or not. We see that further RFT training, using a STaR-like algorithm \cite{zelikman2022star} on the trajectory level, as outlined in Sec. \ref{sec:RFT}, achieves success rate of 31.3\%, which is a small improvements of 2.7\% over the initial model. This is not surprising since the base model is already trained as an agent on the environment with supervised fine-tuning on demonstrations. Our next experiment fine-tunes the base model using the trajectory-level DPO algorithm, as outlined in Eq. \ref{eq:trajectory_DPO} in Sec. \ref{sec:DPO} using successful trajectories as preferred over failed ones. This approach also uses only outcome-level supervision, but unlike the RFT baseline can utilize failed trajectories as well, which improves the agent performance by 9.3\% over RFT agent to 40.6\% success rate. We also evaluate this model with beam search for the action generation, which can be considered a form of planning the horizon of a single environment action (which still consists of multiple simple actions) \cite{rafailov2023direct}, but it only yields marginal improvement over the base model. These findings match results on reasoning for math problems \cite{pang2024iterative} and some recent approaches that also apply DPO to agent applications \cite{song2024trial, xi2024agentgymevolvinglargelanguage}.


Despite the additional reinforcement learning training, our agents are still not able to match the average human performance on this environment. We identify that one of the core failure modes of the DPO policy is that it executes a greedy search when looking for matches to the product query. For example, for every search query, the WebShop environment yields a number of pages of results. However, we find that the model nearly always greedily searches for the best matching item in the first page of results rather than using the "[NEXT]" and "[PREV]" buttons to navigate between pages, essentially deploying a weak exploration strategy.

\section{Agent Search}
\label{sec:method}
As we discovered in the previous section, while training based on outcome supervision with DPO yields meaningful improvement, the model is still not able to match human performance due to it's limited exploration. In this section we will explore endowing the agent with additional search capability via MCTS.

\subsection{Monte-Carlo Tree Search Over Web-Pages}
The Monte Carlo Tree Search (MCTS) algorithm \cite{kocsis2006bandit} employed in this work follows closely the one in \cite{hao2023reasoning} and consists of four phases: selection, expansion, simulation, and backpropagation. Each phase plays a critical role in balancing exploration and exploitation while iteratively refining the policy.

We formulate the web agent execution as tree search over web-pages. The state is represented as described in Section \ref{sec:agent_formulation} and consist of the summary of the agent's history and the DOM tree of the current web-page. Unlike board games, such as Chess or Go \cite{silver2017mastering} the complex web-agent action space we use is open-format and variable. Instead we will use the base model as an action-proposal distribution and sample a fixed amount of possible actions at each node (web-page). Once we select and execute an action in the browser we traverse the next web-page, which together with the updated history becomes the new node.

\subsubsection{Action Selection With AI Process Supervision}\label{sec:ai_process_supervision}
\begin{figure}
    \centering
    \includegraphics[width=0.95\linewidth]{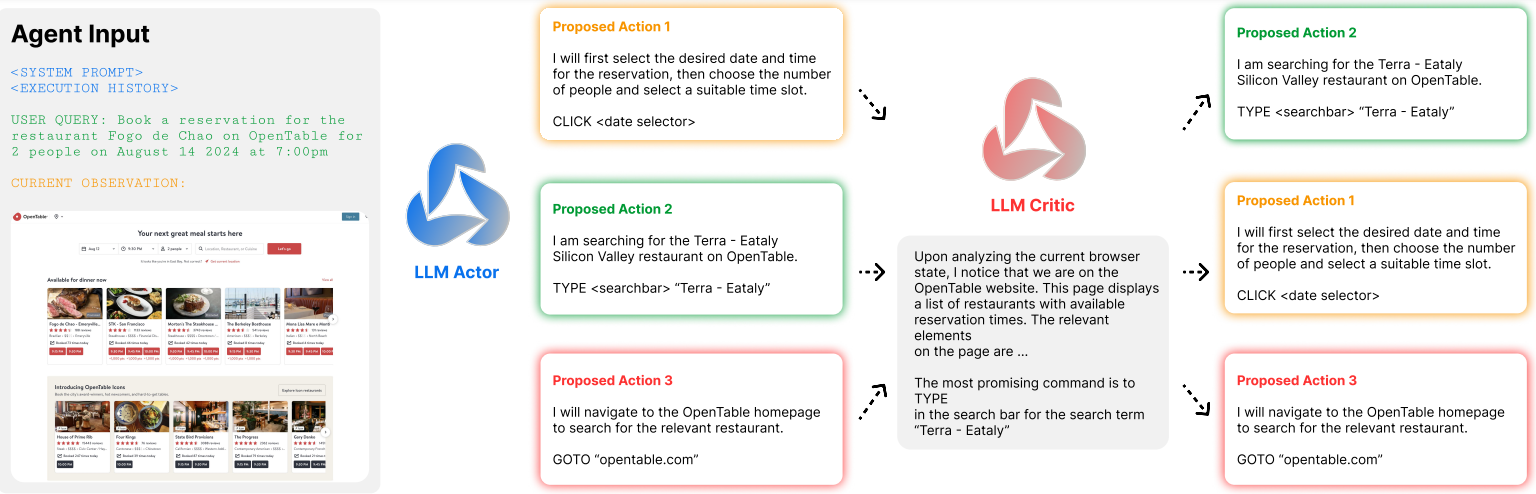}
    \caption{The policy proposes \textit{K} actions at every step during inference time search. The critic, also initialized as the same base LLM model used by the policy, ranks the actions proposed by the policy. This ranking is used to guide node selection after expansion and used to construct preference pairs during policy training.}
    \label{fig:AI_process_supervision}
\end{figure}

The selection phase uses the Upper Confidence Bound (UCB1) formulation of MCTS also used by \cite{hao2023reasoning} to select nodes which aims to balance exploration and exploitation. With some abuse of notation we will also denote the agent state with $\mathbf{h}_t$. We consider the value function $Q(\mathbf{h}_t, \mathbf{a})$ which represents the estimated value (chance of success)  represents the estimated value of taking action $\mathbf{a}$ in the state $\mathbf{h}_t$. At each new node $\mathbf{h}_t$ we sample $K$ proposal actions from the base model $\mathbf{a}_t^{1}, \ldots, \mathbf{a}_t^K$. We initialize all values $Q(\mathbf{h}_t, \mathbf{a}_t^i), i=1, \ldots, K$ to zero. The web-based environment does not provide intermediate rewards to guide the search, so we incorporate AI-based critique to provide process supervision at the step level to guide the exploration process. We use the base model to produce a feedback score for each action by asking it to rank the generated actions by its perceived utility in helping the agent complete the user task. 

We query the feedback model for multiple iterations, each time removing the best action selected from the previous iteration from the list, until we have a full ranking of all actions. The full AI feedback process is demonstrated in Figure \ref{fig:AI_process_supervision}. After the initial selection, we select the actions to explore based on the standard MCTS UCB1 formulation:
\begin{equation}\label{eq:mcts_ucb}
    \mathbf{a}_{t}^* = \arg\max_{\mathbf{a}_t^1,\ldots, \mathbf{a}_t^K} \left[ Q(\mathbf{h}_t, \mathbf{a}) + c_{\text{exp}} \cdot \sqrt{\frac{\log N(\mathbf{h}_t)}{1 + N(\mathbf{h}_{t+1})}} \right],
\end{equation}
where \(N(\mathbf{h}_t)\) is the visitation frequency of state \(\mathbf{h}_t\), and \(c_{\text{exp}}\) is an exploration constant. For each rollout added to the tree, we start at the root node and follow the child states that maximize the UCB1 score until we reach a leaf node. This process is repeated for each tree/prompt in the batch.

\subsubsection{Expansion and Backtracking}
Based on the preceding section, we select and execute an action in the browser environment to reach a new node (page). Beginning from the selected state node's trace, we roll out the trajectory using the current policy $\pi_\theta$ until a terminal state is reached. The environment returns a reward at the end of the trajectory, \(R\), where $R=1$ if the agent was successful and $R=0$ otherwise. We then backpropagate this reward by updating the values of each node bottom up from the leaf node to the root as follows:
\begin{equation}\label{eq:qvals}
\begin{aligned}
Q(\mathbf{h}_t, \mathbf{a}_t^i) &\leftarrow \frac{Q(\mathbf{h}_t, \mathbf{a}_t^i)N(\mathbf{h}_t, \mathbf{a}_t^i) + R}{N(\mathbf{h}_t, \mathbf{a}_t^i) + 1} \\
N(\mathbf{h}_t, \mathbf{a}_t^i) &\leftarrow N(\mathbf{h}_t, \mathbf{a}_t^i) + 1
\end{aligned}
\end{equation}

Each state node tracks two values: $Q(\mathbf{h}_t, \mathbf{a}_t^i)$, the average reward for passing through state $\mathbf{h}_t$ and choosing action $\mathbf{a}_t^i$, and $N(\mathbf{h}_t, \mathbf{a}_t^i)$, the number of times this state action pair was visited during search (and  $N(\mathbf{h}_t) = \sum_{i=1}^KN(\mathbf{h}_t, \mathbf{a}_t^i)$). The backpropogation updates correctly maintain these values. 

\subsection{Improving Zero-Shot Performance with Reinforcement Learning}

\begin{algorithm}[t]
\caption{MCTS Guided Direct Preference Optimization}\label{alg:MCTSDPO}
\begin{algorithmic}
\State \textbf{Input:} $\pi_{\theta_0}$: initial LLM policy, $\mathcal{D}_T$: dataset of tasks the agent must complete in the environment, $N$: number of iterations, $B$: number of samples per iteration, $T$: MCTS tree depth, $\mathcal{B}$: replay buffer, $\theta_{\text{threshold}}$: value threshold in (\ref{eq:mixQ}), $K$: number of actions to sample for MCTS
\State \textbf{Output:} $\pi_{\theta_N}$, the trained LLM policy

\For{$i=1$ to $N$}
    \State $\pi_{\text{ref}} \leftarrow \pi_{\theta_i}$, $\pi_{\theta_i} \leftarrow \pi_{\theta_{i-1}}$
    \State Sample a batch of $B$ tasks from $\mathcal{D}_T$
    \For{each task in batch}
        \State Initialize the root node $\mathbf{h}_0$
        \For{$t=1$ to $T$}
            \State \textbf{Selection:} Traverse tree from the root node to a leaf node using tree policy (UCB1; \ref{eq:mcts_ucb})
            \State \textbf{Trajectory Rollout}: From the selected node's trace, roll out the trajectory using
            \State \hspace{\algorithmicindent} $\pi_{\theta_i}$ until a terminal state is reached
            \State \textbf{Backpropagation:} Backpropagate the value estimate bottom-up (\ref{eq:qvals})
        \EndFor
        \State Collect trajectories from rollouts and store them in replay buffer $\mathcal{B}$
    \EndFor
    \State Construct preference pairs $\mathcal{D}_P = \{(\mathbf{h}_t, \mathbf{a}_t^{w}, \mathbf{a}_t^{l})\}_{t=1}^{T-1}$ where $\mathbf{h}_t \sim \mathcal{D}_P$. For each node at step level $t$, compare each pair of child nodes, and construct the pair of generated actions $(\mathbf{a}^{w}, \mathbf{a}^{l})$ if the values of taking the action, $|Q(\mathbf{h}_t, \mathbf{a}^{w}) - Q(\mathbf{h}_{t}, \mathbf{a}^{l})| > \theta_{\text{threshold}}$, where $Q(\mathbf{h}_t, \mathbf{a}^{w})$ and $Q(\mathbf{h}_{t}, \mathbf{a}^{l})$ are computed using (\ref{eq:mixQ})
    \State Optimize LLM policy $\pi_{\theta_i}$ using DPO objective in Eq. (\ref{eq:step_DPO}) with $\mathcal{D}_P$ and $\pi_{\text{ref}}$
\EndFor
\end{algorithmic}
\end{algorithm}

Training large foundation models with offline \cite{snell2022offline} or off-policy \cite{chebotar2023qtransformer} reinforcement learning at scale has still remained challenging. At the same time online (on-policy) reinforcement  learning \cite{stiennon2022learning, ouyang2022training} is not scalable to real interactive environments. Instead, we follow a line of recent works, which apply the DPO algorithm \cite{rafailov2023direct, rafailov2024r} at the step level in multi-step reasoning problems in mathematical domains \cite{xie2024monte, hwang2024selfexplore, chen2024steplevelvaluepreferenceoptimization, lai2024stepdpostepwisepreferenceoptimization, lu2024stepcontrolleddpoleveragingstepwise, setlur2024rlincorrectsyntheticdata, zhang2024chainpreferenceoptimizationimproving}. Our approach is most similar to \cite{xie2024monte, chen2024steplevelvaluepreferenceoptimization, zhang2024chainpreferenceoptimizationimproving} who also use the branching nature of tree search to produce step-level preference pairs. We will also use this approach in our setting due to its simplicity, scalability and prior success in smaller scale (non-interactive) reasoning applications.

We will generate a dataset of preference pairs $\mathcal{P}=\{\mathbf{h}_t, \mathbf{a}_t^w, \mathbf{a}_t^l\}$ where we make sure both actions were explored. We then optimize the DPO objective in Eq. \ref{eq:step_DPO} on the node level. We will leverage a theoretical result below to guide the construction of these preferences. We can make a number of modifications to Theorem 6.1 from \cite{setlur2024rlincorrectsyntheticdata} to incorporate the interactive nature of the web environment dynamics to obtain the following result:

\begin{theorem} 
Consider a policy that optimizes the objective in Eq. \ref{eq:multi_step_rl} on trajectories generated by $\pi_{\text{ref}}$ and that at each node $\mathbf{h}_t$ we have preferences generated accordingly to $p(\mathbf{a}_t^w\succ\mathbf{a}_t^l|\mathbf{h}_t)\propto \sigma(Q(\mathbf{h}_t, \mathbf{a}_t^w)-Q(\mathbf{h}_t, \mathbf{a}_t^l))$, then the policy which optimizes the DPO objective in Eq. \ref{eq:step_DPO} is identical to the optimal RL policy
\begin{equation}
    \pi^*(\mathbf{a}|\mathbf{h}_t)\propto \pi_{\text{ref}}(\mathbf{a}|\mathbf{h}_t)\exp\left(Q(\mathbf{h}_t, \mathbf{a})/\beta\right)
\end{equation}
\end{theorem}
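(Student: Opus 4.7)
The plan is to show two things --- that the RL optimum of Eq.~\ref{eq:multi_step_rl} and the DPO optimum of Eq.~\ref{eq:step_DPO} both admit the same closed form --- and then invoke uniqueness of each. The strategy parallels the original single-step DPO derivation of \citet{rafailov2023direct}, but localized at each node of the tree rather than at the prompt level, with $Q$ playing the role of the reward.

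First I would handle the RL side by a node-wise variational argument. Fix a node $\mathbf{h}_t$, freeze the policy at all other nodes, and look at the contribution of $\pi(\cdot|\mathbf{h}_t)$ to Eq.~\ref{eq:multi_step_rl}. Because $Q(\mathbf{h}_t, \mathbf{a}_t)$ already absorbs the immediate reward plus the expected future return under trajectories from $\pi_{\text{ref}}$, that contribution equals, up to terms independent of $\pi(\cdot|\mathbf{h}_t)$,
\begin{equation*}
\mathbb{E}_{\mathbf{a} \sim \pi(\cdot|\mathbf{h}_t)}[Q(\mathbf{h}_t, \mathbf{a})] - \beta \, \mathbb{D}_{KL}\bigl(\pi(\cdot|\mathbf{h}_t) \,\|\, \pi_{\text{ref}}(\cdot|\mathbf{h}_t)\bigr).
\end{equation*}
Rewriting this as a negative KL to the tilted measure $\pi_{\text{ref}}(\mathbf{a}|\mathbf{h}_t)\exp(Q(\mathbf{h}_t,\mathbf{a})/\beta)/Z(\mathbf{h}_t)$ identifies its unique maximizer as exactly the claimed softmax policy; this is the same completion-of-squares trick used in the standard single-step case.

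Next, for the DPO side, I would invert the Bradley--Terry model. Since the preferences at $\mathbf{h}_t$ are generated with log-odds $Q(\mathbf{h}_t, \mathbf{a}^w) - Q(\mathbf{h}_t, \mathbf{a}^l)$, any implicit reward consistent with these preferences must equal $Q$ up to an action-independent shift. DPO parameterizes implicit rewards as $\hat{r}(\mathbf{h}_t, \mathbf{a}_t) = \beta \log \pi_\theta(\mathbf{a}_t|\mathbf{h}_t)/\pi_{\text{ref}}(\mathbf{a}_t|\mathbf{h}_t)$, so the unique minimizer of Eq.~\ref{eq:step_DPO} in that class satisfies $\beta \log \pi^*(\mathbf{a}|\mathbf{h}_t)/\pi_{\text{ref}}(\mathbf{a}|\mathbf{h}_t) = Q(\mathbf{h}_t,\mathbf{a}) + c(\mathbf{h}_t)$ for some $c$ independent of $\mathbf{a}$. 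Normalizing over $\mathbf{a}$ recovers the same tilted form as the RL optimum, finishing the matching.

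The delicate point I expect to spend the most effort on is justifying the node-wise decomposition when downstream nodes are themselves being re-optimized. Because $Q$ here is tied to rollouts from $\pi_{\text{ref}}$, the cleanest reading is as a one-step soft policy improvement, where each step's KL penalty and reward genuinely decouple and the above local argument suffices. If instead one wanted $Q$ to be a soft-Bellman fixed point of $\pi^*$, the same algebra goes through pointwise but one would need an additional backward induction on the horizon to close the Bellman recursion, along the lines of the token-level extension in \citet{rafailov2024r}.
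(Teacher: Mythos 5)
Your proposal is correct and takes essentially the same route as the paper: the paper's own ``proof'' is a one-line deferral to Theorem 6.1 of \citet{setlur2024rlincorrectsyntheticdata} and the control-as-inference arguments of \citet{rafailov2024r} and \citet{levine2018reinforcement}, and the two steps you spell out --- identifying the node-wise RL optimum as the unique minimizer of a KL to the tilted measure $\pi_{\text{ref}}\exp(Q/\beta)/Z$, and inverting the Bradley--Terry model so that the DPO implicit reward $\beta\log(\pi/\pi_{\text{ref}})$ must equal $Q$ up to a per-context shift --- are exactly the content of those cited results. Your closing caveat about whether $Q$ is $Q^{\pi_{\text{ref}}}$ (yielding one-step soft policy improvement) or a soft-Bellman fixed point (requiring an additional backward induction) correctly flags the only real looseness, which lies in the theorem statement itself rather than in your argument.
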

\begin{proof}
The proof follows directly from the proof of Theorem 6.1 in \cite{setlur2024rlincorrectsyntheticdata} and the control as inference arguments in \cite{rafailov2024r, levine2018reinforcement}.
\end{proof}

That is, we can approximate the optimal RL policy if we generate preferences under the optimal value function (or an approximation thereof). Since the outcome success provides limited supervision we also incorporate process supervision through the AI feedback as outlined in Section \ref{sec:ai_process_supervision}. We interpret the ranking of possible actions by the model to be driven by an implicit value function. Similar semantics was used in \cite{koh2024tree}, where GPT-4 was used as a zero-shot value function, while here we ask the model to instead reason over the given potential actions and provide rankings instead. This self-rewarding approach has shown promise in the RLHF setting \cite{yuan2024selfrewarding} and we utilize it for our agent setting as well. Under this formulation, we compute the state-action value as an average:
\begin{equation}\label{eq:mixQ}
    Q(\mathbf{h}_t, \mathbf{a}_t^i) = \alpha \tilde{Q}(\mathbf{h}_t, \mathbf{a}_t^i) + (1-\alpha)\hat{Q}(\mathbf{h}_t, \mathbf{a}_t^i)
\end{equation}
where $\tilde{Q}(\mathbf{h}_t, \mathbf{a}_t^i)$ is the empirical value estimated through MCTS backpropagation and $\hat{Q}(\mathbf{h}_t, \mathbf{a}_t^i)$ is a value estimate based on the ranking of the action $\mathbf{a}_t^i$ by the process supervision AI model. We then create preferences over pairs of actions which are above a certain value threshold $|Q(\mathbf{h}_t, \mathbf{a}_t^w)-Q(\mathbf{h}_t, \mathbf{a}_t^l)|\geq \theta_{\text{threshold}}$. The full outline of our RL approach is shown in Algorithm \ref{alg:MCTSDPO}.

\subsection{Full WebShop Results}
The full range of results and baselines is shown in Figure \ref{fig:WebSHop_initial_results}. We see that equipping the agent with search capabilities at test time significantly boost success rates from 28.6\% to 48.4\% when using MCTS on top of the base xLAM-v0.1-r model, approaching close to the average human performance of 50.0\% and significantly out-performing the zero-shot performance of the DPO model trained with outcome supervision. We further fine-tune the base model using the approach outlined in Algorithm \ref{alg:MCTSDPO}, which yields an improvement of 0.9\% over the base DPO model. Using MCTS on top of the trained Agent Q model further improves performance to 50.5\% slightly out-performing the average human success rates. We find that the ability to search at test time is a significant paradigm shift from zero-shot agents, even with significant RL training. Furthermore, while dense-level supervision improves over purely outcome-based one, the improvement is modest on WebShop. This is because the environment requires relatively short trajectories, and the model is capable to learn credit assignment purely from outcome supervision. We will further explore more complex real world environment, which requires longer-range credit assignment. 
\section{Scaling To Real World Websites}
\label{sec:experiments}
In this section we will investigate scaling the Agent Q framework to real use cases on live websites, in particular bookings on OpenTable. We carried out initial experiments with the xLAM-v0.1-r model, which proved to weak for the task achieving an initial success rate of 0.0\%. Instead we shifted to the LLaMa 70B Instruct model, which was able to achive some non-trivial initial success. 

\subsection{The OpenTable Environment}
\begin{figure}
  \centering
  \vspace{-5mm}
  \includegraphics[width=0.92\textwidth]{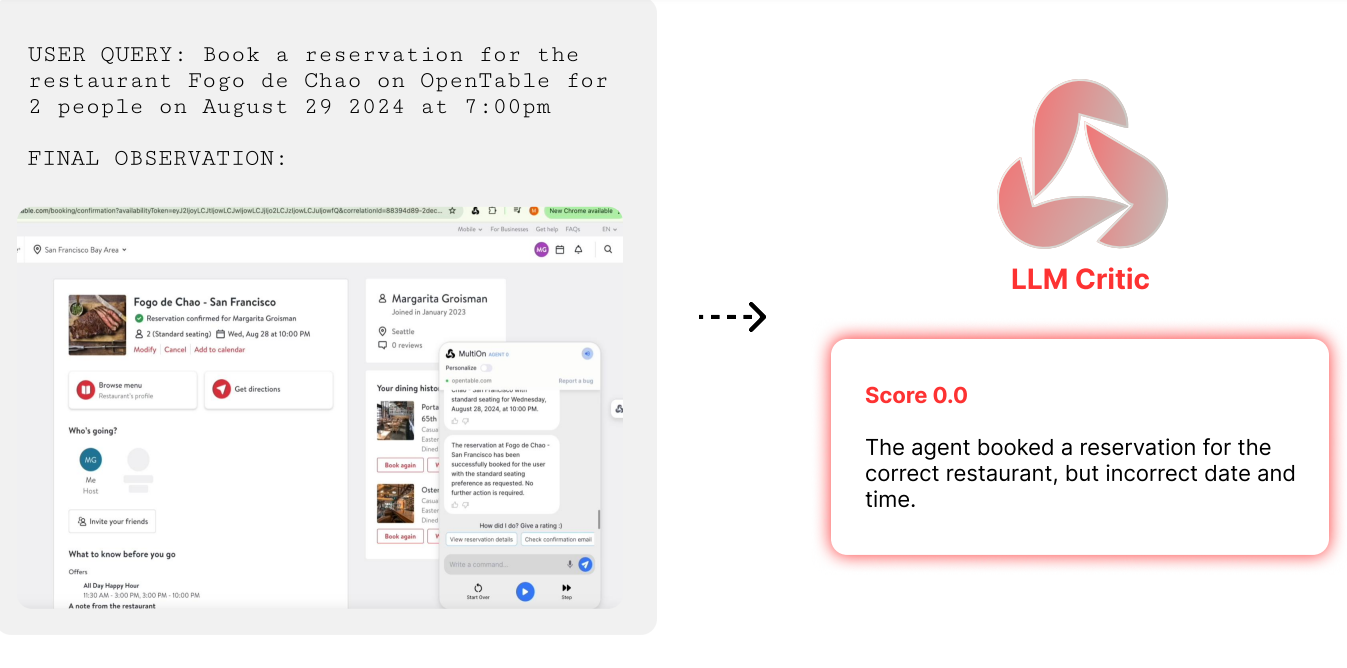}
  \caption{At the end of a trajectory, a GPT-4-V evaluator is called to provide feedback on the agent's performance given the final observation and action history to determine the success score. The model is prompted with a condensed execution history of the trajectory and the screenshot of the final state. The success metric is a binary 0/1 value.}
  \label{fig:outcome_supervision}
\end{figure}

In OpenTable, the agent is tasked with booking a restaurant reservation for a user. The agent must find a restaurant page on the OpenTable site, look for a reservation at a certain date and time, choose seating options that align with a user's preference and submit the user contact information to complete the task successfully. Since OpenTable is a live environment and is difficult to programatically measure metrics for, we use a language model, GPT-4-V to collect rewards for each trajectory, based on the following metrics: (1) date and time set correctly, (2) party size set correctly, (3) user information entered correctly, and (4) clicked complete reservation. The task is marked as completed if each of the above constraints are satisfied. The outcome supervision setup is shown in Figure \ref{fig:outcome_supervision}. We experimented with using LLaMa 70B for outcome supervision as well, but discovered that vision capabilities significantly improve the success classification accuracy (as measured by human validation). At the time of writing no open source vision-language model of sufficient capability was available, hence we opted to use GPT-4-V. We believe that as more open-source multi-modal models become available we can switch to a fully self-supervised pipeline. 

To generate queries for the OpenTable benchmark dataset, we programatically generate a diverse set of user queries by combining the restaurant name, desired date and time, and user information. 

Navigating on live websites pose a wide variety of challenges. For example, consider that the user specifies a restaurant in a different city than the location the browser is initialized in, the model will have to take extra steps to find the restaurant. Further, if the exact user requested date and time are not available, the model may have to choose the closest available reservation slot. Lastly, if there are preferences, such as indoor or outdoor seating options that the model is presented with, the desired behavior is to interact with the user to determine the best course of action. OpenTable presents a complex set of challenges for web navigation agents; the number of steps required to complete the task is on average 13.9 steps, over double the average number of steps for Webshop, 6.8.

\begin{figure}
  \centering
  \vspace{-5mm}
  \includegraphics[width=0.92\textwidth]{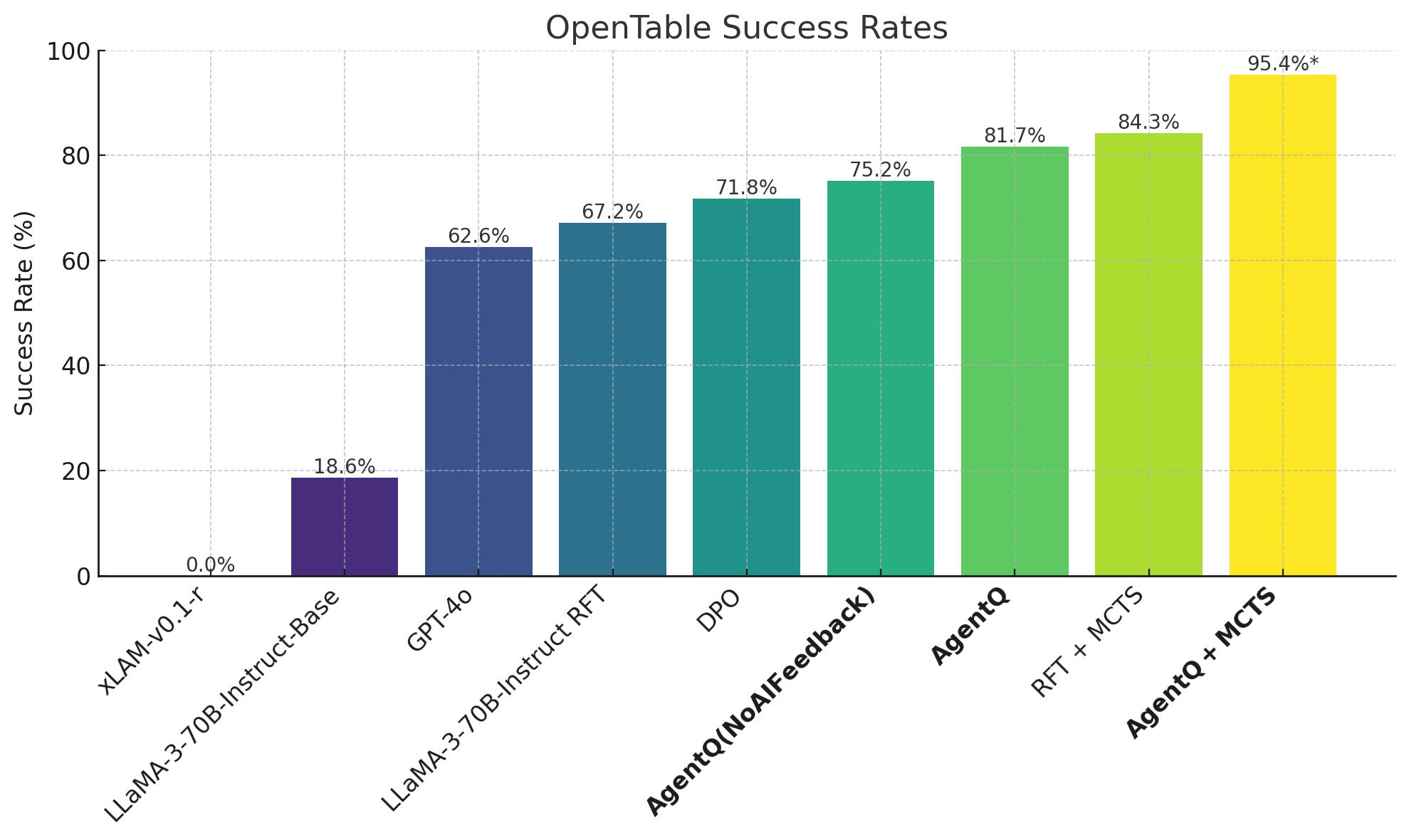}
  \caption{Success rate of different approaches on OpenTable. All models unless otherwise stated are based on LLaMA-3-70B-Instruct \cite{touvron2023llama}. Using DPO and RFT with MCTS further improves performance from 18.6\% to 71.8\% and 84.3\% respectively. We show that Agent Q in itself achieves 81.7\% and Agent Q + MCTS significantly outperforms all other techniques, with a performance of \textbf{95.4\%} on OpenTable.}
  \label{fig:opentable_sr}
\end{figure}

For the observation space for this environment, we design an intermediate state representation that crawls the raw HTML content of a website to retrieve relevant visual components, and highlight interactive elements to the model. The agent is allowed the actions, "CLICK [ID]", "GOTO [URL]", "TYPE [ID] [TEXT]", "SUBMIT [ID]", "CLEAR [ID]", "SCROLL [UP/DOWN]", and "ASK USER HELP". For OpenTable experiments, we use the LLaMA-3-70B-Instruct model as the initial policy. We find that the superior reasoning abilities of this class of model is required for effective task completion, which is necessary to produce the diverse success and failure trajectories required to effectively improve the policy. 

\subsection{Results On OpenTable}
The base xLAM-v0.1-r model achieves a success rate of 0.0\%, largely from failing to follow instructions for the general web navigation instructions used for live websites, contrary to the simplified observation and action space used in WebShop. We instead initialize the base policy with the LLaMa-3 70B Instruct model, which achieves a zero-shot success rate of 18.6\%. We do a single round of RFT on 600 successful trajectories which improves the success rate to 67.2\% already out-performing the the GPT-4o model zero-shot performance with a success rate of 62.6\%. For all other baselines we adopt the RFT model as the reference policy, due to the relatively low success rate of original LLaMa 3 70B Instruct model. 

In this environment, training with outcome-supervision only DPO further improves performance by 4.6\% to 71.8\% but significantly under-performs the full Agent Q pipeline which achieves a zero-shot success rate of 81.7\% We hypothesizes that this is due to the fact that OpenTable is a significantly more challenging environment, which requires almost twice as many steps to complete as WebShop, so the agent benefits from fine-grained supervision and credit assignment. We further ablate the role of the intermediate AI feedback process supervision during training as outlined in Eq. \ref{eq:mixQ} and use MCTS with online Q values computed from outcome rewards only. This setting still outperforms training with trajectory-level DPO (75.2\% versus 71.8\%) likely due to the more fine-grained credit assignment that the branching tree search provides to the agent. However, zero-shot performance is still meaningfully worse than using intermediate process-level supervision and the full Agent Q achieves 6.5\% higher success rate at 81.7\%. 

Similar to the WebShop experiment we see a step level increase in capability from allowing the model to search at inference time, with the base RFT model achieving 84.3\% success with MCTS, outperforming the Agent Q zero-shot performance of 81.7\% success. However, if we carry out additional MCTS search using the Agent Q model as the base policy we achieve a significant 95.4\% success rate.

\section{Discussion}
\label{sec:discussion}
In this work we developed algorithms for autonomous improvement of web-agents with limited human supervision. While most prior works build frameworks around existing models without additional training, we specifically seek to fine-tune pre-trained models for web navigation tasks based on synthetic reasoning and search data. While we achieve significant improvement in model capabilities on our target domain, many research questions remain. 

\textbf{Design of reasoning algorithms.} The core challenge for our web agents is the weak reasoning capabilities, which limit the agent's exploration and search strategy. In our approach we used process-level supervision from a separate critic model, which we prompt to rank possible agent actions. This is in contrast to works in mathematical reasoning where PRMs are usually trained to classify the correctness of individual steps \cite{lightman2023lets}, while other agent works \cite{koh2024tree} have prompted models as zero-shot value functions. Furthermore, while we spent significant effort in training the agent policy, we maintain a frozen critic, which would likely also benefit from additional fine-tuning. We defer exploration of these design choices to further work. 

\textbf{Choice of search algorithm.} We used MCTS search due to the approach's prior success in mathematical and code reasoning tasks. However, agent models executing MCTS on live environments might require significant number of risky interactions and a different search strategy might be more suitable. Recent works such as \cite{lehnert2024abetterplanningtransformers, gandhi2024streamsearchsoslearning} have even suggested directly learning to optimally search and explore in reasoning tasks using meta-reinforcement learning. We believe this is a promising research direction for autonomous agents, which we will pursue in further work. 

\textbf{Discrepancy between zero-shot vs search results.} Similar to some recent works that focus on code and reasoning, we observe significant gap between zero-shot agent performance and performance of the agent equipped with search capabilities \cite{snell2024scalingllmtesttimecompute, brown2024largelanguagemonkeysscaling}. Investigating these trade-offs at scale and the potential effect of different search/optimization approaches. 

\textbf{Online safety and interaction.} The design of agent Q allows for largely autonomous exploration, self-evaluation and improvement with limited human intervention. However, the agent might make a significant number of mistakes in it's search process which might be difficult to fix/reverse, especially for safety-critical online transactions, such as communications/email, payments, filings etc. This limits the scope of websites that Agent Q can be safely deployed and we might require additional safety critics and human-in-the-loop training setups.

\newpage
\bibliography{main}

\begin{thebibliography}{91}
\providecommand{\natexlab}[1]{#1}
\providecommand{\url}[1]{\texttt{#1}}
\expandafter\ifx\csname urlstyle\endcsname\relax
  \providecommand{\doi}[1]{doi: #1}\else
  \providecommand{\doi}{doi: \begingroup \urlstyle{rm}\Url}\fi

\bibitem[Achiam et~al.(2023)Achiam, Adler, Agarwal, Ahmad, Akkaya, Aleman, Almeida, Altenschmidt, Altman, Anadkat, et~al.]{openai2024gpt4}
Josh Achiam, Steven Adler, Sandhini Agarwal, Lama Ahmad, Ilge Akkaya, Florencia~Leoni Aleman, Diogo Almeida, Janko Altenschmidt, Sam Altman, Shyamal Anadkat, et~al.
\newblock Gpt-4 technical report.
\newblock \emph{arXiv preprint arXiv:2303.08774}, 2023.

\bibitem[Anil et~al.(2023)Anil, Borgeaud, Wu, Alayrac, Yu, Soricut, Schalkwyk, Dai, Hauth, Millican, et~al.]{geminiteam2023gemini}
Rohan Anil, Sebastian Borgeaud, Yonghui Wu, Jean-Baptiste Alayrac, Jiahui Yu, Radu Soricut, Johan Schalkwyk, Andrew~M Dai, Anja Hauth, Katie Millican, et~al.
\newblock Gemini: A family of highly capable multimodal models.
\newblock \emph{arXiv preprint arXiv:2312.11805}, 1, 2023.

\bibitem[Anthropic(2024)]{opus2024anthropic}
Anthropic.
\newblock Introducing the next generation of claude, 2024.
\newblock URL \url{Introducing the next generation of Claude}.

\bibitem[Bai et~al.(2024)Bai, Zhou, Cemri, Pan, Suhr, Levine, and Kumar]{bai2024digirl}
Hao Bai, Yifei Zhou, Mert Cemri, Jiayi Pan, Alane Suhr, Sergey Levine, and Aviral Kumar.
\newblock Digirl: Training in-the-wild device-control agents with autonomous reinforcement learning, 2024.

\bibitem[Bai et~al.(2022)Bai, Kadavath, Kundu, Askell, Kernion, Jones, Chen, Goldie, Mirhoseini, McKinnon, Chen, Olsson, Olah, Hernandez, Drain, Ganguli, Li, Tran-Johnson, Perez, Kerr, Mueller, Ladish, Landau, Ndousse, Lukosuite, Lovitt, Sellitto, Elhage, Schiefer, Mercado, DasSarma, Lasenby, Larson, Ringer, Johnston, Kravec, Showk, Fort, Lanham, Telleen-Lawton, Conerly, Henighan, Hume, Bowman, Hatfield-Dodds, Mann, Amodei, Joseph, McCandlish, Brown, and Kaplan]{bai2022constitutional}
Yuntao Bai, Saurav Kadavath, Sandipan Kundu, Amanda Askell, Jackson Kernion, Andy Jones, Anna Chen, Anna Goldie, Azalia Mirhoseini, Cameron McKinnon, Carol Chen, Catherine Olsson, Christopher Olah, Danny Hernandez, Dawn Drain, Deep Ganguli, Dustin Li, Eli Tran-Johnson, Ethan Perez, Jamie Kerr, Jared Mueller, Jeffrey Ladish, Joshua Landau, Kamal Ndousse, Kamile Lukosuite, Liane Lovitt, Michael Sellitto, Nelson Elhage, Nicholas Schiefer, Noemi Mercado, Nova DasSarma, Robert Lasenby, Robin Larson, Sam Ringer, Scott Johnston, Shauna Kravec, Sheer~El Showk, Stanislav Fort, Tamera Lanham, Timothy Telleen-Lawton, Tom Conerly, Tom Henighan, Tristan Hume, Samuel~R. Bowman, Zac Hatfield-Dodds, Ben Mann, Dario Amodei, Nicholas Joseph, Sam McCandlish, Tom Brown, and Jared Kaplan.
\newblock Constitutional ai: Harmlessness from ai feedback, 2022.

\bibitem[Besta et~al.(2024)Besta, Blach, Kubicek, Gerstenberger, Podstawski, Gianinazzi, Gajda, Lehmann, Niewiadomski, Nyczyk, and Hoefler]{Besta_2024}
Maciej Besta, Nils Blach, Ales Kubicek, Robert Gerstenberger, Michal Podstawski, Lukas Gianinazzi, Joanna Gajda, Tomasz Lehmann, Hubert Niewiadomski, Piotr Nyczyk, and Torsten Hoefler.
\newblock Graph of thoughts: Solving elaborate problems with large language models.
\newblock \emph{Proceedings of the AAAI Conference on Artificial Intelligence}, 38\penalty0 (16):\penalty0 17682–17690, March 2024.
\newblock ISSN 2159-5399.
\newblock \doi{10.1609/aaai.v38i16.29720}.
\newblock URL \url{http://dx.doi.org/10.1609/aaai.v38i16.29720}.

\bibitem[Brown et~al.(2024)Brown, Juravsky, Ehrlich, Clark, Le, Ré, and Mirhoseini]{brown2024largelanguagemonkeysscaling}
Bradley Brown, Jordan Juravsky, Ryan Ehrlich, Ronald Clark, Quoc~V. Le, Christopher Ré, and Azalia Mirhoseini.
\newblock Large language monkeys: Scaling inference compute with repeated sampling, 2024.
\newblock URL \url{https://arxiv.org/abs/2407.21787}.

\bibitem[Brown and Sandholm(2019)]{brown2019}
Noam Brown and Tuomas Sandholm.
\newblock Superhuman ai for multiplayer poker.
\newblock \emph{Science}, 2019.

\bibitem[Chebotar et~al.(2023)Chebotar, Vuong, Irpan, Hausman, Xia, Lu, Kumar, Yu, Herzog, Pertsch, Gopalakrishnan, Ibarz, Nachum, Sontakke, Salazar, Tran, Peralta, Tan, Manjunath, Singht, Zitkovich, Jackson, Rao, Finn, and Levine]{chebotar2023qtransformer}
Yevgen Chebotar, Quan Vuong, Alex Irpan, Karol Hausman, Fei Xia, Yao Lu, Aviral Kumar, Tianhe Yu, Alexander Herzog, Karl Pertsch, Keerthana Gopalakrishnan, Julian Ibarz, Ofir Nachum, Sumedh Sontakke, Grecia Salazar, Huong~T Tran, Jodilyn Peralta, Clayton Tan, Deeksha Manjunath, Jaspiar Singht, Brianna Zitkovich, Tomas Jackson, Kanishka Rao, Chelsea Finn, and Sergey Levine.
\newblock Q-transformer: Scalable offline reinforcement learning via autoregressive q-functions, 2023.

\bibitem[Chen et~al.(2024)Chen, Liao, Li, and Fan]{chen2024steplevelvaluepreferenceoptimization}
Guoxin Chen, Minpeng Liao, Chengxi Li, and Kai Fan.
\newblock Step-level value preference optimization for mathematical reasoning, 2024.
\newblock URL \url{https://arxiv.org/abs/2406.10858}.

\bibitem[Chen and Li(2024)]{chen2024octopus}
Wei Chen and Zhiyuan Li.
\newblock Octopus v2: On-device language model for super agent, 2024.

\bibitem[Deng et~al.(2023)Deng, Gu, Zheng, Chen, Stevens, Wang, Sun, and Su]{deng2023mind2web}
Xiang Deng, Yu~Gu, Boyuan Zheng, Shijie Chen, Samuel Stevens, Boshi Wang, Huan Sun, and Yu~Su.
\newblock Mind2web: Towards a generalist agent for the web.
\newblock In \emph{NeurIPS Datasets and Benchmarks Track}, 2023.
\newblock URL \url{https://openreview.net/forum?id=kiYqbO3wqw}.

\bibitem[Dubois et~al.(2024)Dubois, Li, Taori, Zhang, Gulrajani, Ba, Guestrin, Liang, and Hashimoto]{dubois2024alpacafarm}
Yann Dubois, Xuechen Li, Rohan Taori, Tianyi Zhang, Ishaan Gulrajani, Jimmy Ba, Carlos Guestrin, Percy Liang, and Tatsunori~B. Hashimoto.
\newblock Alpacafarm: A simulation framework for methods that learn from human feedback, 2024.

\bibitem[Gandhi et~al.(2024)Gandhi, Lee, Grand, Liu, Cheng, Sharma, and Goodman]{gandhi2024streamsearchsoslearning}
Kanishk Gandhi, Denise Lee, Gabriel Grand, Muxin Liu, Winson Cheng, Archit Sharma, and Noah~D. Goodman.
\newblock Stream of search (sos): Learning to search in language, 2024.
\newblock URL \url{https://arxiv.org/abs/2404.03683}.

\bibitem[Gray et~al.(2021)Gray, Lerer, Bakhtin, and Brown]{gray2021humanlevel}
Jonathan Gray, Adam Lerer, Anton Bakhtin, and Noam Brown.
\newblock Human-level performance in no-press diplomacy via equilibrium search, 2021.

\bibitem[Gulcehre et~al.(2023)Gulcehre, Paine, Srinivasan, Konyushkova, Weerts, Sharma, Siddhant, Ahern, Wang, Gu, Macherey, Doucet, Firat, and de~Freitas]{gulcehre2023reinforced}
Caglar Gulcehre, Tom~Le Paine, Srivatsan Srinivasan, Ksenia Konyushkova, Lotte Weerts, Abhishek Sharma, Aditya Siddhant, Alex Ahern, Miaosen Wang, Chenjie Gu, Wolfgang Macherey, Arnaud Doucet, Orhan Firat, and Nando de~Freitas.
\newblock Reinforced self-training (rest) for language modeling, 2023.

\bibitem[Gur et~al.(2024)Gur, Furuta, Huang, Safdari, Matsuo, Eck, and Faust]{gur2024real}
Izzeddin Gur, Hiroki Furuta, Austin~V Huang, Mustafa Safdari, Yutaka Matsuo, Douglas Eck, and Aleksandra Faust.
\newblock A real-world webagent with planning, long context understanding, and program synthesis.
\newblock In \emph{ICLR}, 2024.
\newblock URL \url{https://openreview.net/forum?id=9JQtrumvg8}.

\bibitem[Hao et~al.(2023)Hao, Gu, Ma, Hong, Wang, Wang, and Hu]{hao2023reasoning}
Shibo Hao, Yi~Gu, Haodi Ma, Joshua~Jiahua Hong, Zhen Wang, Daisy~Zhe Wang, and Zhiting Hu.
\newblock Reasoning with language model is planning with world model, 2023.

\bibitem[He et~al.(2024)He, Yao, Ma, Yu, Dai, Zhang, Lan, and Yu]{he2024webvoyager}
Hongliang He, Wenlin Yao, Kaixin Ma, Wenhao Yu, Yong Dai, Hongming Zhang, Zhenzhong Lan, and Dong Yu.
\newblock Webvoyager: Building an end-to-end web agent with large multimodal models.
\newblock \emph{ArXiv}, 2024.
\newblock URL \url{https://api.semanticscholar.org/CorpusID:267211622}.

\bibitem[Hejna et~al.(2024)Hejna, Rafailov, Sikchi, Finn, Niekum, Knox, and Sadigh]{hejna2024contrastive}
Joey Hejna, Rafael Rafailov, Harshit Sikchi, Chelsea Finn, Scott Niekum, W.~Bradley Knox, and Dorsa Sadigh.
\newblock Contrastive preference learning: Learning from human feedback without reinforcement learning.
\newblock In \emph{The Twelfth International Conference on Learning Representations}, 2024.
\newblock URL \url{https://openreview.net/forum?id=iX1RjVQODj}.

\bibitem[Holt et~al.(2024)Holt, Luyten, and van~der Schaar]{holt2024l2mac}
Samuel Holt, Max~Ruiz Luyten, and Mihaela van~der Schaar.
\newblock L2mac: Large language model automatic computer for extensive code generation, 2024.

\bibitem[Hong et~al.(2023)Hong, Wang, Lv, Xu, Yu, Ji, Wang, Wang, Dong, Ding, and Tang]{hong2023cogagent}
Wenyi Hong, Weihan Wang, Qingsong Lv, Jiazheng Xu, Wenmeng Yu, Junhui Ji, Yan Wang, Zihan Wang, Yuxiao Dong, Ming Ding, and Jie Tang.
\newblock Cogagent: A visual language model for gui agents.
\newblock \emph{ArXiv}, 2023.

\bibitem[Huang et~al.(2022{\natexlab{a}})Huang, Abbeel, Pathak, and Mordatch]{huang2022languagemodelszeroshotplanners}
Wenlong Huang, Pieter Abbeel, Deepak Pathak, and Igor Mordatch.
\newblock Language models as zero-shot planners: Extracting actionable knowledge for embodied agents, 2022{\natexlab{a}}.

\bibitem[Huang et~al.(2022{\natexlab{b}})Huang, Xia, Xiao, Chan, Liang, Florence, Zeng, Tompson, Mordatch, Chebotar, Sermanet, Brown, Jackson, Luu, Levine, Hausman, and Ichter]{huang2022innermonologueembodiedreasoning}
Wenlong Huang, Fei Xia, Ted Xiao, Harris Chan, Jacky Liang, Pete Florence, Andy Zeng, Jonathan Tompson, Igor Mordatch, Yevgen Chebotar, Pierre Sermanet, Noah Brown, Tomas Jackson, Linda Luu, Sergey Levine, Karol Hausman, and Brian Ichter.
\newblock Inner monologue: Embodied reasoning through planning with language models, 2022{\natexlab{b}}.
\newblock URL \url{https://arxiv.org/abs/2207.05608}.

\bibitem[Humplik et~al.(2019)Humplik, Galashov, Hasenclever, Ortega, Teh, and Heess]{humplik2019metareinforcementlearningtask}
Jan Humplik, Alexandre Galashov, Leonard Hasenclever, Pedro~A. Ortega, Yee~Whye Teh, and Nicolas Heess.
\newblock Meta reinforcement learning as task inference, 2019.
\newblock URL \url{https://arxiv.org/abs/1905.06424}.

\bibitem[Hwang et~al.(2024)Hwang, Kim, Kim, Ye, and Seo]{hwang2024selfexplore}
Hyeonbin Hwang, Doyoung Kim, Seungone Kim, Seonghyeon Ye, and Minjoon Seo.
\newblock Self-explore to avoid the pit: Improving the reasoning capabilities of language models with fine-grained rewards, 2024.

\bibitem[Jiang et~al.(2024)Jiang, Sablayrolles, Roux, Mensch, Savary, Bamford, Chaplot, de~las Casas, Hanna, Bressand, Lengyel, Bour, Lample, Lavaud, Saulnier, Lachaux, Stock, Subramanian, Yang, Antoniak, Scao, Gervet, Lavril, Wang, Lacroix, and Sayed]{jiang2024mixtral}
Albert~Q. Jiang, Alexandre Sablayrolles, Antoine Roux, Arthur Mensch, Blanche Savary, Chris Bamford, Devendra~Singh Chaplot, Diego de~las Casas, Emma~Bou Hanna, Florian Bressand, Gianna Lengyel, Guillaume Bour, Guillaume Lample, Lélio~Renard Lavaud, Lucile Saulnier, Marie-Anne Lachaux, Pierre Stock, Sandeep Subramanian, Sophia Yang, Szymon Antoniak, Teven~Le Scao, Théophile Gervet, Thibaut Lavril, Thomas Wang, Timothée Lacroix, and William~El Sayed.
\newblock Mixtral of experts, 2024.

\bibitem[Jimenez et~al.(2024)Jimenez, Yang, Wettig, Yao, Pei, Press, and Narasimhan]{jimenez2024swebench}
Carlos~E. Jimenez, John Yang, Alexander Wettig, Shunyu Yao, Kexin Pei, Ofir Press, and Karthik Narasimhan.
\newblock Swe-bench: Can language models resolve real-world github issues?, 2024.

\bibitem[John~Schulman et~al.(2022)John~Schulman, Kim, Hilton, Menick, Weng, Uribe, Fedus, Luke~Metz, Lopes, Zhao, Vijayvergiya, Sigler, Perelman, Voss, Heaton, Parish, Cummings, Nayak, Balcom, Schnurr, Kaftan, Hallacy, Turley, Deutsch, Goel, Ward, Konstantinidis, Zaremba, Ouyang, Bogdonoff, Gross, Medina, Yoo, Lee, Lowe, Mossing, Huizinga, Jiang, amd Diogo~Almeida, Lin, Zhang, Xiao, Slama, Bills, Gray, Leike, Pachocki, Tillet, Jain, Brockman, Ryder, Paino, Yuan, Winter, Wang, Bavarian, Babuschkin, Sidor, Kanitscheider, Pavlov, Plappert, Tezak, Jun, Zhuk, Pong, Kaiser, Tworek, Carr, Weng, Agarwal, Cobbe, Kosaraju, Power, Polu, Han, Puri, Jain, Chess, Gibson, Boiko, Parparita, Tootoonchian, Kosic, and Hesse]{openAI2022}
Barret~Zoph John~Schulman, Christina Kim, Jacob Hilton, Jacob Menick, Jiayi Weng, Juan Felipe~Ceron Uribe, Liam Fedus, Michael~Pokorny Luke~Metz, Rapha~Gontijo Lopes, Shengjia Zhao, Arun Vijayvergiya, Eric Sigler, Adam Perelman, Chelsea Voss, Mike Heaton, Joel Parish, Dave Cummings, Rajeev Nayak, Valerie Balcom, David Schnurr, Tomer Kaftan, Chris Hallacy, Nicholas Turley, Noah Deutsch, Vik Goel, Jonathan Ward, Aris Konstantinidis, Wojciech Zaremba, Long Ouyang, Leonard Bogdonoff, Joshua Gross, David Medina, Sarah Yoo, Teddy Lee, Ryan Lowe, Dan Mossing, Joost Huizinga, Roger Jiang, Carroll~Wainwright amd Diogo~Almeida, Steph Lin, Marvin Zhang, Kai Xiao, Katarina Slama, Steven Bills, Alex Gray, Jan Leike, Jakub Pachocki, Phil Tillet, Shantanu Jain, Greg Brockman, Nick Ryder, Alex Paino, Qiming Yuan, Clemens Winter, Ben Wang, Mo~Bavarian, Igor Babuschkin, Szymon Sidor, Ingmar Kanitscheider, Mikhail Pavlov, Matthias Plappert, Nik Tezak, Heewoo Jun, William Zhuk, Vitchyr Pong, Lukasz Kaiser, Jerry Tworek, Andrew
  Carr, Lilian Weng, Sandhini Agarwal, Karl Cobbe, Vineet Kosaraju, Alethea Power, Stanislas Polu, Jesse Han, Raul Puri, Shawn Jain, Benjamin Chess, Christian Gibson, Oleg Boiko, Emy Parparita, Amin Tootoonchian, Kyle Kosic, and Christopher Hesse.
\newblock Introducing chatgpt, 2022.
\newblock URL \url{https://openai.com/blog/chatgpt#OpenAI}.

\bibitem[Kocsis and Szepesv{\'a}ri(2006)]{kocsis2006bandit}
Levente Kocsis and Csaba Szepesv{\'a}ri.
\newblock Bandit based monte-carlo planning.
\newblock In \emph{Machine Learning: ECML 2006: 17th European Conference on Machine Learning Berlin, Germany, September 18-22, 2006 Proceedings}, pages 282--293. Springer, 2006.

\bibitem[Koh et~al.(2024)Koh, McAleer, Fried, and Salakhutdinov]{koh2024tree}
Jing~Yu Koh, Stephen McAleer, Daniel Fried, and Ruslan Salakhutdinov.
\newblock Tree search for language agent models, 2024.
\newblock URL \url{https://jykoh.com/search-agents/paper.pdf}.

\bibitem[Kojima et~al.(2022)Kojima, Gu, Reid, Matsuo, and Iwasawa]{kojima2022large}
Takeshi Kojima, Shixiang~Shane Gu, Machel Reid, Yutaka Matsuo, and Yusuke Iwasawa.
\newblock Large language models are zero-shot reasoners, 2022.

\bibitem[Lai et~al.(2024{\natexlab{a}})Lai, Liu, Iong, Yao, Chen, Shen, Yu, Zhang, Zhang, Dong, and Tang]{lai2024autowebglm}
Hanyu Lai, Xiao Liu, Iat~Long Iong, Shuntian Yao, Yuxuan Chen, Pengbo Shen, Hao Yu, Hanchen Zhang, Xiaohan Zhang, Yuxiao Dong, and Jie Tang.
\newblock Autowebglm: Bootstrap and reinforce a large language model-based web navigating agent, 2024{\natexlab{a}}.

\bibitem[Lai et~al.(2024{\natexlab{b}})Lai, Tian, Chen, Yang, Peng, and Jia]{lai2024stepdpostepwisepreferenceoptimization}
Xin Lai, Zhuotao Tian, Yukang Chen, Senqiao Yang, Xiangru Peng, and Jiaya Jia.
\newblock Step-dpo: Step-wise preference optimization for long-chain reasoning of llms, 2024{\natexlab{b}}.
\newblock URL \url{https://arxiv.org/abs/2406.18629}.

\bibitem[Lehnert et~al.(2024)Lehnert, Sukhbaatar, Su, Zheng, Mcvay, Rabbat, and Tian]{lehnert2024abetterplanningtransformers}
Lucas Lehnert, Sainbayar Sukhbaatar, DiJia Su, Qinqing Zheng, Paul Mcvay, Michael Rabbat, and Yuandong Tian.
\newblock Beyond a*: Better planning with transformers via search dynamics bootstrapping, 2024.
\newblock URL \url{https://arxiv.org/abs/2402.14083}.

\bibitem[Levine(2018)]{levine2018reinforcement}
Sergey Levine.
\newblock Reinforcement learning and control as probabilistic inference: Tutorial and review, 2018.

\bibitem[Levine et~al.(2020)Levine, Kumar, Tucker, and Fu]{levine2020offline}
Sergey Levine, Aviral Kumar, George Tucker, and Justin Fu.
\newblock Offline reinforcement learning: Tutorial, review, and perspectives on open problems, 2020.

\bibitem[Lightman et~al.(2023)Lightman, Kosaraju, Burda, Edwards, Baker, Lee, Leike, Schulman, Sutskever, and Cobbe]{lightman2023lets}
Hunter Lightman, Vineet Kosaraju, Yura Burda, Harri Edwards, Bowen Baker, Teddy Lee, Jan Leike, John Schulman, Ilya Sutskever, and Karl Cobbe.
\newblock Let's verify step by step, 2023.

\bibitem[Liu et~al.(2023)Liu, Yao, Zhang, Xue, Heinecke, Murthy, Feng, Chen, Niebles, Arpit, Xu, Mui, Wang, Xiong, and Savarese]{liu2023bolaa}
Zhiwei Liu, Weiran Yao, Jianguo Zhang, Le~Xue, Shelby Heinecke, Rithesh Murthy, Yihao Feng, Zeyuan Chen, Juan~Carlos Niebles, Devansh Arpit, Ran Xu, Phil Mui, Huan Wang, Caiming Xiong, and Silvio Savarese.
\newblock Bolaa: Benchmarking and orchestrating llm-augmented autonomous agents, 2023.

\bibitem[Liu et~al.(2024)Liu, Yao, Zhang, Yang, Liu, Tan, Choubey, Lan, Wu, Wang, Heinecke, Xiong, and Savarese]{liu2024agentlite}
Zhiwei Liu, Weiran Yao, Jianguo Zhang, Liangwei Yang, Zuxin Liu, Juntao Tan, Prafulla~K. Choubey, Tian Lan, Jason Wu, Huan Wang, Shelby Heinecke, Caiming Xiong, and Silvio Savarese.
\newblock Agentlite: A lightweight library for building and advancing task-oriented llm agent system, 2024.

\bibitem[Lu et~al.(2024)Lu, Zhou, Wang, Ren, Shi, Pan, Zhan, and Li]{lu2024stepcontrolleddpoleveragingstepwise}
Zimu Lu, Aojun Zhou, Ke~Wang, Houxing Ren, Weikang Shi, Junting Pan, Mingjie Zhan, and Hongsheng Li.
\newblock Step-controlled dpo: Leveraging stepwise error for enhanced mathematical reasoning, 2024.
\newblock URL \url{https://arxiv.org/abs/2407.00782}.

\bibitem[Masterman et~al.(2024)Masterman, Besen, Sawtell, and Chao]{masterman2024landscape}
Tula Masterman, Sandi Besen, Mason Sawtell, and Alex Chao.
\newblock The landscape of emerging ai agent architectures for reasoning, planning, and tool calling: A survey, 2024.

\bibitem[Nakano et~al.(2021)Nakano, Hilton, Balaji, Wu, Ouyang, Kim, Hesse, Jain, Kosaraju, Saunders, et~al.]{nakano2021webgpt}
Reiichiro Nakano, Jacob Hilton, Suchir Balaji, Jeff Wu, Long Ouyang, Christina Kim, Christopher Hesse, Shantanu Jain, Vineet Kosaraju, William Saunders, et~al.
\newblock Webgpt: Browser-assisted question-answering with human feedback.
\newblock \emph{arXiv preprint arXiv:2112.09332}, 2021.

\bibitem[Nakano et~al.(2022)Nakano, Hilton, Balaji, Wu, Ouyang, Kim, Hesse, Jain, Kosaraju, Saunders, Jiang, Cobbe, Eloundou, Krueger, Button, Knight, Chess, and Schulman]{nakano2022webgpt}
Reiichiro Nakano, Jacob Hilton, Suchir Balaji, Jeff Wu, Long Ouyang, Christina Kim, Christopher Hesse, Shantanu Jain, Vineet Kosaraju, William Saunders, Xu~Jiang, Karl Cobbe, Tyna Eloundou, Gretchen Krueger, Kevin Button, Matthew Knight, Benjamin Chess, and John Schulman.
\newblock Webgpt: Browser-assisted question-answering with human feedback, 2022.

\bibitem[Ouyang et~al.(2022)Ouyang, Wu, Jiang, Almeida, Wainwright, Mishkin, Zhang, Agarwal, Slama, Ray, Schulman, Hilton, Kelton, Miller, Simens, Askell, Welinder, Christiano, Leike, and Lowe]{ouyang2022training}
Long Ouyang, Jeffrey Wu, Xu~Jiang, Diogo Almeida, Carroll Wainwright, Pamela Mishkin, Chong Zhang, Sandhini Agarwal, Katarina Slama, Alex Ray, John Schulman, Jacob Hilton, Fraser Kelton, Luke Miller, Maddie Simens, Amanda Askell, Peter Welinder, Paul~F Christiano, Jan Leike, and Ryan Lowe.
\newblock Training language models to follow instructions with human feedback.
\newblock In S.~Koyejo, S.~Mohamed, A.~Agarwal, D.~Belgrave, K.~Cho, and A.~Oh, editors, \emph{Advances in Neural Information Processing Systems}, volume~35, pages 27730--27744. Curran Associates, Inc., 2022.
\newblock URL \url{https://proceedings.neurips.cc/paper_files/paper/2022/file/b1efde53be364a73914f58805a001731-Paper-Conference.pdf}.

\bibitem[Pan et~al.(2024)Pan, Zhang, Tomlin, Zhou, Levine, and Suhr]{pan2024autonomous}
Jiayi Pan, Yichi Zhang, Nicholas Tomlin, Yifei Zhou, Sergey Levine, and Alane Suhr.
\newblock Autonomous evaluation and refinement of digital agents, 2024.

\bibitem[Pang et~al.(2024)Pang, Yuan, Cho, He, Sukhbaatar, and Weston]{pang2024iterative}
Richard~Yuanzhe Pang, Weizhe Yuan, Kyunghyun Cho, He~He, Sainbayar Sukhbaatar, and Jason Weston.
\newblock Iterative reasoning preference optimization, 2024.

\bibitem[Qiao et~al.(2023)Qiao, Ou, Zhang, Chen, Yao, Deng, Tan, Huang, and Chen]{qiao2023reasoning}
Shuofei Qiao, Yixin Ou, Ningyu Zhang, Xiang Chen, Yunzhi Yao, Shumin Deng, Chuanqi Tan, Fei Huang, and Huajun Chen.
\newblock Reasoning with language model prompting: A survey, 2023.

\bibitem[Rafailov et~al.(2023)Rafailov, Sharma, Mitchell, Manning, Ermon, and Finn]{rafailov2023direct}
Rafael Rafailov, Archit Sharma, Eric Mitchell, Christopher~D Manning, Stefano Ermon, and Chelsea Finn.
\newblock Direct preference optimization: Your language model is secretly a reward model.
\newblock In \emph{Thirty-seventh Conference on Neural Information Processing Systems}, 2023.
\newblock URL \url{https://arxiv.org/abs/2305.18290}.

\bibitem[Rafailov et~al.(2024)Rafailov, Hejna, Park, and Finn]{rafailov2024r}
Rafael Rafailov, Joey Hejna, Ryan Park, and Chelsea Finn.
\newblock From $r$ to $q^*$: Your language model is secretly a q-function, 2024.

\bibitem[Schulman et~al.(2017)Schulman, Wolski, Dhariwal, Radford, and Klimov]{schulman2017proximal}
John Schulman, Filip Wolski, Prafulla Dhariwal, Alec Radford, and Oleg Klimov.
\newblock Proximal policy optimization algorithms, 2017.

\bibitem[Setlur et~al.(2024{\natexlab{a}})Setlur, Garg, Geng, Garg, Smith, and Kumar]{setlur2024rl}
Amrith Setlur, Saurabh Garg, Xinyang Geng, Naman Garg, Virginia Smith, and Aviral Kumar.
\newblock Rl on incorrect synthetic data scales the efficiency of llm math reasoning by eight-fold, 2024{\natexlab{a}}.

\bibitem[Setlur et~al.(2024{\natexlab{b}})Setlur, Garg, Geng, Garg, Smith, and Kumar]{setlur2024rlincorrectsyntheticdata}
Amrith Setlur, Saurabh Garg, Xinyang Geng, Naman Garg, Virginia Smith, and Aviral Kumar.
\newblock Rl on incorrect synthetic data scales the efficiency of llm math reasoning by eight-fold, 2024{\natexlab{b}}.
\newblock URL \url{https://arxiv.org/abs/2406.14532}.

\bibitem[Silver et~al.(2017{\natexlab{a}})Silver, Schrittwieser, Simonyan, Antonoglou, Huang, Guez, Hubert, Baker, Lai, Bolton, Chen, Lillicrap, Hui, Sifre, van~den Driessche, Graepel, and Hassabis]{silver2017go}
David Silver, Julian Schrittwieser, Karen Simonyan, Ioannis Antonoglou, Aja Huang, Arthur Guez, Thomas Hubert, Lucas Baker, Matthew Lai, Adrian Bolton, Yutian Chen, Timothy Lillicrap, Fan Hui, Laurent Sifre, George van~den Driessche, Thore Graepel, and Demis Hassabis.
\newblock Mastering the game of go with deep neural networks and tree search.
\newblock \emph{Nature}, 2017{\natexlab{a}}.

\bibitem[Silver et~al.(2017{\natexlab{b}})Silver, Schrittwieser, Simonyan, Antonoglou, Huang, Guez, Hubert, Baker, Lai, Bolton, et~al.]{silver2017mastering}
David Silver, Julian Schrittwieser, Karen Simonyan, Ioannis Antonoglou, Aja Huang, Arthur Guez, Thomas Hubert, Lucas Baker, Matthew Lai, Adrian Bolton, et~al.
\newblock Mastering the game of go without human knowledge.
\newblock \emph{Nature}, 550\penalty0 (7676):\penalty0 354--359, 2017{\natexlab{b}}.

\bibitem[Singh et~al.(2024)Singh, Co-Reyes, Agarwal, Anand, Patil, Garcia, Liu, Harrison, Lee, Xu, Parisi, Kumar, Alemi, Rizkowsky, Nova, Adlam, Bohnet, Elsayed, Sedghi, Mordatch, Simpson, Gur, Snoek, Pennington, Hron, Kenealy, Swersky, Mahajan, Culp, Xiao, Bileschi, Constant, Novak, Liu, Warkentin, Qian, Bansal, Dyer, Neyshabur, Sohl-Dickstein, and Fiedel]{singh2024human}
Avi Singh, John~D. Co-Reyes, Rishabh Agarwal, Ankesh Anand, Piyush Patil, Xavier Garcia, Peter~J. Liu, James Harrison, Jaehoon Lee, Kelvin Xu, Aaron Parisi, Abhishek Kumar, Alex Alemi, Alex Rizkowsky, Azade Nova, Ben Adlam, Bernd Bohnet, Gamaleldin Elsayed, Hanie Sedghi, Igor Mordatch, Isabelle Simpson, Izzeddin Gur, Jasper Snoek, Jeffrey Pennington, Jiri Hron, Kathleen Kenealy, Kevin Swersky, Kshiteej Mahajan, Laura Culp, Lechao Xiao, Maxwell~L. Bileschi, Noah Constant, Roman Novak, Rosanne Liu, Tris Warkentin, Yundi Qian, Yamini Bansal, Ethan Dyer, Behnam Neyshabur, Jascha Sohl-Dickstein, and Noah Fiedel.
\newblock Beyond human data: Scaling self-training for problem-solving with language models, 2024.

\bibitem[Snell et~al.(2024)Snell, Lee, Xu, and Kumar]{snell2024scalingllmtesttimecompute}
Charlie Snell, Jaehoon Lee, Kelvin Xu, and Aviral Kumar.
\newblock Scaling llm test-time compute optimally can be more effective than scaling model parameters, 2024.
\newblock URL \url{https://arxiv.org/abs/2408.03314}.

\bibitem[Snell et~al.(2022)Snell, Kostrikov, Su, Yang, and Levine]{snell2022offline}
Charlie~Victor Snell, Ilya Kostrikov, Yi~Su, Sherry Yang, and Sergey Levine.
\newblock Offline rl for natural language generation with implicit language q learning.
\newblock In \emph{The Eleventh International Conference on Learning Representations}, 2022.

\bibitem[Song et~al.(2024)Song, Yin, Yue, Huang, Li, and Lin]{song2024trial}
Yifan Song, Da~Yin, Xiang Yue, Jie Huang, Sujian Li, and Bill~Yuchen Lin.
\newblock Trial and error: Exploration-based trajectory optimization for llm agents, 2024.

\bibitem[Stiennon et~al.(2022)Stiennon, Ouyang, Wu, Ziegler, Lowe, Voss, Radford, Amodei, and Christiano]{stiennon2022learning}
Nisan Stiennon, Long Ouyang, Jeff Wu, Daniel~M. Ziegler, Ryan Lowe, Chelsea Voss, Alec Radford, Dario Amodei, and Paul Christiano.
\newblock Learning to summarize from human feedback, 2022.

\bibitem[Sumers et~al.(2024)Sumers, Yao, Narasimhan, and Griffiths]{sumers2024cognitive}
Theodore~R. Sumers, Shunyu Yao, Karthik Narasimhan, and Thomas~L. Griffiths.
\newblock Cognitive architectures for language agents, 2024.

\bibitem[Tajwar et~al.(2024)Tajwar, Singh, Sharma, Rafailov, Schneider, Xie, Ermon, Finn, and Kumar]{tajwar2024preference}
Fahim Tajwar, Anikait Singh, Archit Sharma, Rafael Rafailov, Jeff Schneider, Tengyang Xie, Stefano Ermon, Chelsea Finn, and Aviral Kumar.
\newblock Preference fine-tuning of llms should leverage suboptimal, on-policy data, 2024.

\bibitem[Tian et~al.(2024)Tian, Peng, Song, Jin, Yu, Mi, and Yu]{tian2024selfimprovement}
Ye~Tian, Baolin Peng, Linfeng Song, Lifeng Jin, Dian Yu, Haitao Mi, and Dong Yu.
\newblock Toward self-improvement of llms via imagination, searching, and criticizing, 2024.

\bibitem[Touvron et~al.(2023)Touvron, Martin, Stone, Albert, Almahairi, Babaei, Bashlykov, Batra, Bhargava, Bhosale, Bikel, Blecher, Ferrer, Chen, Cucurull, Esiobu, Fernandes, Fu, Fu, Fuller, Gao, Goswami, Goyal, Hartshorn, Hosseini, Hou, Inan, Kardas, Kerkez, Khabsa, Kloumann, Korenev, Koura, Lachaux, Lavril, Lee, Liskovich, Lu, Mao, Martinet, Mihaylov, Mishra, Molybog, Nie, Poulton, Reizenstein, Rungta, Saladi, Schelten, Silva, Smith, Subramanian, Tan, Tang, Taylor, Williams, Kuan, Xu, Yan, Zarov, Zhang, Fan, Kambadur, Narang, Rodriguez, Stojnic, Edunov, and Scialom]{touvron2023llama}
Hugo Touvron, Louis Martin, Kevin Stone, Peter Albert, Amjad Almahairi, Yasmine Babaei, Nikolay Bashlykov, Soumya Batra, Prajjwal Bhargava, Shruti Bhosale, Dan Bikel, Lukas Blecher, Cristian~Canton Ferrer, Moya Chen, Guillem Cucurull, David Esiobu, Jude Fernandes, Jeremy Fu, Wenyin Fu, Brian Fuller, Cynthia Gao, Vedanuj Goswami, Naman Goyal, Anthony Hartshorn, Saghar Hosseini, Rui Hou, Hakan Inan, Marcin Kardas, Viktor Kerkez, Madian Khabsa, Isabel Kloumann, Artem Korenev, Punit~Singh Koura, Marie-Anne Lachaux, Thibaut Lavril, Jenya Lee, Diana Liskovich, Yinghai Lu, Yuning Mao, Xavier Martinet, Todor Mihaylov, Pushkar Mishra, Igor Molybog, Yixin Nie, Andrew Poulton, Jeremy Reizenstein, Rashi Rungta, Kalyan Saladi, Alan Schelten, Ruan Silva, Eric~Michael Smith, Ranjan Subramanian, Xiaoqing~Ellen Tan, Binh Tang, Ross Taylor, Adina Williams, Jian~Xiang Kuan, Puxin Xu, Zheng Yan, Iliyan Zarov, Yuchen Zhang, Angela Fan, Melanie Kambadur, Sharan Narang, Aurelien Rodriguez, Robert Stojnic, Sergey Edunov, and Thomas
  Scialom.
\newblock Llama 2: Open foundation and fine-tuned chat models, 2023.

\bibitem[Uesato et~al.(2022)Uesato, Kushman, Kumar, Song, Siegel, Wang, Creswell, Irving, and Higgins]{uesato2022solving}
Jonathan Uesato, Nate Kushman, Ramana Kumar, Francis Song, Noah Siegel, Lisa Wang, Antonia Creswell, Geoffrey Irving, and Irina Higgins.
\newblock Solving math word problems with process- and outcome-based feedback, 2022.

\bibitem[Wang et~al.(2024{\natexlab{a}})Wang, Xu, Ye, Yan, Shen, Zhang, Huang, and Sang]{wang2024mobileagent}
Junyang Wang, Haiyang Xu, Jiabo Ye, Ming Yan, Weizhou Shen, Ji~Zhang, Fei Huang, and Jitao Sang.
\newblock Mobile-agent: Autonomous multi-modal mobile device agent with visual perception.
\newblock \emph{arXiv}, 2024{\natexlab{a}}.

\bibitem[Wang et~al.(2024{\natexlab{b}})Wang, Li, Shao, Xu, Dai, Li, Chen, Wu, and Sui]{wang2024mathshepherd}
Peiyi Wang, Lei Li, Zhihong Shao, R.~X. Xu, Damai Dai, Yifei Li, Deli Chen, Y.~Wu, and Zhifang Sui.
\newblock Math-shepherd: Verify and reinforce llms step-by-step without human annotations, 2024{\natexlab{b}}.

\bibitem[Wang et~al.(2023)Wang, Wei, Schuurmans, Le, Chi, Narang, Chowdhery, and Zhou]{wang2023selfconsistency}
Xuezhi Wang, Jason Wei, Dale Schuurmans, Quoc Le, Ed~Chi, Sharan Narang, Aakanksha Chowdhery, and Denny Zhou.
\newblock Self-consistency improves chain of thought reasoning in language models, 2023.

\bibitem[Wei et~al.(2022)Wei, Wang, Schuurmans, Bosma, Ichter, Xia, Chi, Le, and Zhou]{wei2023chainofthought}
Jason Wei, Xuezhi Wang, Dale Schuurmans, Maarten Bosma, Brian Ichter, Fei Xia, Ed~Chi, Quoc Le, and Denny Zhou.
\newblock Chain-of-thought prompting elicits reasoning in large language models.
\newblock \emph{Neural Information Processing Systems}, 2022.

\bibitem[Wu et~al.(2024)Wu, Han, Ding, Weng, Liu, Yao, Yu, and Kong]{wu2024os}
Zhiyong Wu, Chengcheng Han, Zichen Ding, Zhenmin Weng, Zhoumianze Liu, Shunyu Yao, Tao Yu, and Lingpeng Kong.
\newblock Os-copilot: Towards generalist computer agents with self-improvement.
\newblock \emph{arXiv}, 2024.
\newblock URL \url{https://api.semanticscholar.org/CorpusID:265149992}.

\bibitem[Xi et~al.(2024)Xi, Ding, Chen, Hong, Guo, Wang, Yang, Liao, Guo, He, Gao, Chen, Zheng, Zou, Gui, Zhang, Qiu, Huang, Wu, and Jiang]{xi2024agentgymevolvinglargelanguage}
Zhiheng Xi, Yiwen Ding, Wenxiang Chen, Boyang Hong, Honglin Guo, Junzhe Wang, Dingwen Yang, Chenyang Liao, Xin Guo, Wei He, Songyang Gao, Lu~Chen, Rui Zheng, Yicheng Zou, Tao Gui, Qi~Zhang, Xipeng Qiu, Xuanjing Huang, Zuxuan Wu, and Yu-Gang Jiang.
\newblock Agentgym: Evolving large language model-based agents across diverse environments, 2024.
\newblock URL \url{https://arxiv.org/abs/2406.04151}.

\bibitem[Xie et~al.(2024)Xie, Goyal, Zheng, Kan, Lillicrap, Kawaguchi, and Shieh]{xie2024monte}
Yuxi Xie, Anirudh Goyal, Wenyue Zheng, Min-Yen Kan, Timothy~P. Lillicrap, Kenji Kawaguchi, and Michael Shieh.
\newblock Monte carlo tree search boosts reasoning via iterative preference learning, 2024.

\bibitem[Yang et~al.(2024)Yang, Jimenez, Wettig, Lieret, Yao, Narasimhan, and Press]{yang2024sweagent}
John Yang, Carlos~E. Jimenez, Alexander Wettig, Kilian Lieret, Shunyu Yao, Karthik Narasimhan, and Ofir Press.
\newblock Swe-agent: Agent-computer interfaces enable automated software engineering, 2024.

\bibitem[Yao et~al.(2022)Yao, Chen, Yang, and Narasimhan]{yao2023webshop}
Shunyu Yao, Howard Chen, John Yang, and Karthik Narasimhan.
\newblock Webshop: Towards scalable real-world web interaction with grounded language agents, 2022.

\bibitem[Yao et~al.(2023{\natexlab{a}})Yao, Yu, Zhao, Shafran, Griffiths, Cao, and Narasimhan]{yao2023tree}
Shunyu Yao, Dian Yu, Jeffrey Zhao, Izhak Shafran, Thomas~L. Griffiths, Yuan Cao, and Karthik~R. Narasimhan.
\newblock Tree of thoughts: Deliberate problem solving with large language models.
\newblock In \emph{NeurIPS}, 2023{\natexlab{a}}.
\newblock URL \url{https://openreview.net/forum?id=5Xc1ecxO1h}.

\bibitem[Yao et~al.(2023{\natexlab{b}})Yao, Zhao, Yu, Du, Shafran, Narasimhan, and Cao]{yao2023react}
Shunyu Yao, Jeffrey Zhao, Dian Yu, Nan Du, Izhak Shafran, Karthik Narasimhan, and Yuan Cao.
\newblock React: Synergizing reasoning and acting in language models, 2023{\natexlab{b}}.

\bibitem[Yuan et~al.(2024)Yuan, Pang, Cho, Li, Sukhbaatar, Xu, and Weston]{yuan2024selfrewarding}
Weizhe Yuan, Richard~Yuanzhe Pang, Kyunghyun Cho, Xian Li, Sainbayar Sukhbaatar, Jing Xu, and Jason Weston.
\newblock Self-rewarding language models, 2024.

\bibitem[Yuan et~al.(2023)Yuan, Yuan, Li, Dong, Lu, Tan, Zhou, and Zhou]{yuan2023scaling}
Zheng Yuan, Hongyi Yuan, Chengpeng Li, Guanting Dong, Keming Lu, Chuanqi Tan, Chang Zhou, and Jingren Zhou.
\newblock Scaling relationship on learning mathematical reasoning with large language models, 2023.

\bibitem[Zelikman et~al.(2022)Zelikman, Wu, Mu, and Goodman]{zelikman2022star}
Eric Zelikman, Yuhuai Wu, Jesse Mu, and Noah Goodman.
\newblock Star: Bootstrapping reasoning with reasoning.
\newblock \emph{Advances in Neural Information Processing Systems}, 35:\penalty0 15476--15488, 2022.

\bibitem[Zhai et~al.(2024)Zhai, Bai, Lin, Pan, Tong, Zhou, Suhr, Xie, LeCun, Ma, and Levine]{zhai2024finetuning}
Yuexiang Zhai, Hao Bai, Zipeng Lin, Jiayi Pan, Shengbang Tong, Yifei Zhou, Alane Suhr, Saining Xie, Yann LeCun, Yi~Ma, and Sergey Levine.
\newblock Fine-tuning large vision-language models as decision-making agents via reinforcement learning, 2024.

\bibitem[Zhang et~al.(2024{\natexlab{a}})Zhang, Li, He, Zhang, Qiao, Qin, Ma, Kang, Lin, and Rajmohan]{zhang2023ufo}
Chaoyun Zhang, Liqun Li, Shilin He, Xu~Zhang, Bo~Qiao, Si~Qin, Minghua Ma, Yu~Kang, Qingwei Lin, and Saravan Rajmohan.
\newblock Ufo: A ui-focused agent for windows os interaction.
\newblock \emph{arXiv}, 2024{\natexlab{a}}.
\newblock URL \url{https://api.semanticscholar.org/CorpusID:267211622}.

\bibitem[Zhang et~al.(2023)Zhang, Yang, Liu, Han, Chen, Huang, Fu, and Yu]{zhang2023appagent}
Chi Zhang, Zhao Yang, Jiaxuan Liu, Yucheng Han, Xin Chen, Zebiao Huang, Bin Fu, and Gang Yu.
\newblock Appagent: Multimodal agents as smartphone users, 2023.

\bibitem[Zhang et~al.(2024{\natexlab{b}})Zhang, Yang, Liu, Han, Chen, Huang, Fu, and Yu]{zhang2024appagent}
Chi Zhang, Zhao Yang, Jiaxuan Liu, Yucheng Han, Xin Chen, Zebiao Huang, Bin Fu, and Gang Yu.
\newblock Appagent: Multimodal agents as smartphone users.
\newblock \emph{arXiv}, 2024{\natexlab{b}}.
\newblock URL \url{https://api.semanticscholar.org/CorpusID:262053313}.

\bibitem[Zhang et~al.(2024{\natexlab{c}})Zhang, Lan, Murthy, Liu, Yao, Tan, Hoang, Yang, Feng, Liu, Awalgaonkar, Niebles, Savarese, Heinecke, Wang, and Xiong]{zhang2024agentohana}
Jianguo Zhang, Tian Lan, Rithesh Murthy, Zhiwei Liu, Weiran Yao, Juntao Tan, Thai Hoang, Liangwei Yang, Yihao Feng, Zuxin Liu, Tulika Awalgaonkar, Juan~Carlos Niebles, Silvio Savarese, Shelby Heinecke, Huan Wang, and Caiming Xiong.
\newblock Agentohana: Design unified data and training pipeline for effective agent learning, 2024{\natexlab{c}}.

\bibitem[Zhang et~al.(2024{\natexlab{d}})Zhang, Li, Li, Shi, and Jin]{zhang2024codeagent}
Kechi Zhang, Jia Li, Ge~Li, Xianjie Shi, and Zhi Jin.
\newblock Codeagent: Enhancing code generation with tool-integrated agent systems for real-world repo-level coding challenges, 2024{\natexlab{d}}.

\bibitem[Zhang et~al.(2024{\natexlab{e}})Zhang, Du, Pang, Liu, Gao, and Lin]{zhang2024chain}
Xuan Zhang, Chao Du, Tianyu Pang, Qian Liu, Wei Gao, and Min Lin.
\newblock Chain of preference optimization: Improving chain-of-thought reasoning in llms, 2024{\natexlab{e}}.

\bibitem[Zhang et~al.(2024{\natexlab{f}})Zhang, Du, Pang, Liu, Gao, and Lin]{zhang2024chainpreferenceoptimizationimproving}
Xuan Zhang, Chao Du, Tianyu Pang, Qian Liu, Wei Gao, and Min Lin.
\newblock Chain of preference optimization: Improving chain-of-thought reasoning in llms, 2024{\natexlab{f}}.
\newblock URL \url{https://arxiv.org/abs/2406.09136}.

\bibitem[Zhang and Zhang(2023)]{zhang2023multimodal}
Zhuosheng Zhang and Aston Zhang.
\newblock You only look at screens: Multimodal chain-of-action agents.
\newblock \emph{ArXiv}, 2023.
\newblock URL \url{https://api.semanticscholar.org/CorpusID:262053313}.

\bibitem[Zhou et~al.(2024{\natexlab{a}})Zhou, Yan, Shlapentokh-Rothman, Wang, and Wang]{zhou2024language}
Andy Zhou, Kai Yan, Michal Shlapentokh-Rothman, Haohan Wang, and Yu-Xiong Wang.
\newblock Language agent tree search unifies reasoning acting and planning in language models, 2024{\natexlab{a}}.

\bibitem[Zhou et~al.(2024{\natexlab{b}})Zhou, Xu, Zhu, Zhou, Lo, Sridhar, Cheng, Ou, Bisk, Fried, Alon, and Neubig]{zhou2024webarena}
Shuyan Zhou, Frank~F. Xu, Hao Zhu, Xuhui Zhou, Robert Lo, Abishek Sridhar, Xianyi Cheng, Tianyue Ou, Yonatan Bisk, Daniel Fried, Uri Alon, and Graham Neubig.
\newblock Webarena: A realistic web environment for building autonomous agents.
\newblock In \emph{ICLR}, 2024{\natexlab{b}}.

\bibitem[Zhou et~al.(2024{\natexlab{c}})Zhou, Zanette, Pan, Levine, and Kumar]{zhou2024archer}
Yifei Zhou, Andrea Zanette, Jiayi Pan, Sergey Levine, and Aviral Kumar.
\newblock Archer: Training language model agents via hierarchical multi-turn rl, 2024{\natexlab{c}}.

\end{thebibliography}

\end{document}